\title{Budgeted Combinatorial Multi-Armed Bandits}
\author{ Debojit Das \\
	International Institute of\\ Information Technology (IIIT)\\
	Hyderabad, India \\
	\texttt{debojit.das@research.iiit.ac.in} \\
	\And
	Shweta Jain \\
	Indian Institute of Technology (IIT)\\
	Ropar, India \\
	\texttt{shwetajain@iitrpr.ac.in} \\
	\And
	Sujit Gujar \\
	International Institute of\\ Information Technology (IIIT)\\
	Hyderabad, India \\
	\texttt{sujit.gujar@iiit.ac.in} \\
}
\newtheorem{theorem}{Theorem}
\newtheorem{Theorem}[theorem]{Theorem}
\newtheorem{Claim}[theorem]{Claim}
\newtheorem{Lemma}[theorem]{Lemma}
\newtheorem{Proposition}[theorem]{Proposition}
\newcommand{\CBwK}{\textsf{CBwK-LP}}
\newcommand{\ourbwk}{\textsf{CBwK-LP-UCB}}
\newcommand{\greedy}{\textsf{CBwK-Greedy-UCB}}
\newcommand{\offgreedy}{\textsf{CBwK-Greedy}}
\begin{document}
\maketitle

\begin{abstract}
	We consider a budgeted combinatorial multi-armed bandit setting where, in every round, the algorithm selects a \emph{super-arm} consisting of one or more arms. The goal is to minimize the total expected regret after all rounds within a limited budget. Existing techniques in this literature either fix the budget per round or fix the number of arms pulled in each round. Our setting is more general where based on the remaining budget and remaining number of rounds, the algorithm can decide how many arms to be pulled in each round.  First, we propose \greedy\  algorithm, which uses a greedy technique, \offgreedy, to allocate the arms to the rounds. Next, we propose a reduction of this problem to Bandits with Knapsacks (BwK) with a single pull. With this reduction, we propose \ourbwk\ that uses PrimalDualBwK ingeniously. We rigorously prove regret bounds for \ourbwk. 
    We experimentally compare the two algorithms and observe that \greedy\  performs incrementally better than \ourbwk. We also show that for very high budgets, the regret goes to zero.
\end{abstract}

\keywords{Combinatorial MAB \and Budgeted MAB \and Crowdsourcing \and Heterogeneous Arms}

\maketitle


\section{Introduction}
\label{sec:intro}

    \noindent In a classical stochastic Multi-Armed Bandit (MAB), there are a set of available arms (bandits), and each arm, when pulled, generates a reward from an unknown distribution. A typical goal is to design an algorithm to optimally pull one arm at each round to learn the mean of their reward distributions while maximizing their earned reward. The underlying concept behind multi-armed bandit problems is the trade-off between \emph{exploration} and \emph{exploitation}. The dilemma is as follows: exploring an arm may lead to playing suboptimal arms, whereas simply exploiting the arm with the highest reward may lead to wrongly identifying some arm as the optimal arm. Therefore, striking a balance is crucial. There is a vast amount of literature for a classical stochastic multi-armed bandit, and optimal algorithms exist to solve the problem up to a constant factor. Among the frequentist approaches, UCB1 algorithm by \citet{Auer2002} achieves a regret matching with the lower bound regret for classical bandit setting \cite{LaiRobbins1985}. Whereas in Bayesian approaches,  \citet{thompson} proposed a Thomson Sampling algorithm which obtains the optimal regret bounds up to a constant factor \cite{agrawal12,kaufmann12}. 
    
    Though the classical MAB problem is important, practical applications generally require different variants. For instance, online shopping filters provide some contexts in the form of filters to improve the quality of the search results. Here, Contextual MAB (\citet{Li2010}) are more suited than the classical MAB. In online advertising, the platform needs to shuffle through ads, and certain ads may not be available at certain times of the day. Researchers model this problem as a Sleeping MAB (\citet{Kleinberg2010}).
    
    This paper considers an essential variant, namely \emph{Combinatorial Bandits with Knapsack constraint} (CBwK). In this variant, the algorithm also incurs a cost from pulling an arm apart from realizing a stochastic reward from an unknown distribution. CBwK variant combines the two well-known variants in MABs, namely, Combinatorial MAB (CMAB) (\citet{Chen2013}) where a subset of arms need to be pulled at each round and Bandits with Knapsack (BwK) (\citet{Badanidiyuru2018}) where a single-arm needs to be pulled at each round but within a budget constraint. An algorithm for CBwK aims to pull a subset of arms at each round to maximize the reward and ensure the budget constraint. There are many applications where combinatorial bandits with knapsack constraints are required. E.g., crowdsourcing where a subset of workers are required to be selected to do a given set of tasks, and there is a fixed budget to do so; internet routing where a path consisting of many nodes needs to be selected to send the packet within the time budget; and Internet of Things where a set of sensors need to be selected at every instance and taking data from a sensor is costly. 
    
    For example, consider a crowdsourcing platform where requesters can post their tasks that need to be completed by a team of workers. Consider that the requester has a series of tasks that need to be performed and has a fixed budget at disposal to hire the workers. The requester would like to hire the best workers to complete each task, but each worker is also associated with a cost. Further, the quality of workers for the task is unknown to the requester and has to be learnt over time. Note that the budget is a strict constraint, and no workers can be selected if the budget is exhausted, even if some tasks are left. In addition to the classical exploration-exploitation dilemma, subset selection with budget constraints poses an additional challenge to the requester. That is, how many workers the requester should select at each round. If it selects too many workers in the initial rounds, learning can be made faster. However, this would also result in incurring too much cost and hence exhaustion of the budget rapidly. On the other hand -- if it selects only a few workers at each round, the tasks may be completed with inferior quality. Much budget may be left at the end, which ideally the requester could have used in hiring more workers for the tasks. Works like \cite{Jain2016, Zheyuan2020} have used MABs in crowdsourcing setting.
   
    Even though there has been extensive research on both the variants CMAB (\cite{Xu2020, Chen2018}) and Budgeted MAB (BMAB) (\cite{Xia2017, Trov2016}), not much has been explored in Budgeted Combinatorial MABs (BCMAB). There is recent work on BCMAB by \citet{Karthik2017}. However, the algorithm divides the budget uniformly over all rounds, which leads to regret of O($\sqrt{T}\log{T}$). Regret of a given algorithm is the difference between the expected reward of an optimal algorithm that has access to the unknown parameters and the expected reward of the given algorithm.
    
    In our setting, the challenge is that no super-arm is optimal across all rounds, whereas in approaches where budget is equally divided in all rounds, there is an optimal super-arm that the algorithm tries to identify. It shows the complexity of designing an algorithm and its regret analysis. We first propose a greedy algorithm for \emph{offline setting} where we assume knowledge of the mean rewards. With this as offline solutions, we propose \greedy\ an UCB-based approach for CBwK. One can easily observe that the regret for a fixed number of rounds drops to zero if we have enough budget to explore all the arms in every round. Though \greedy\ performs well, the regret analysis is elusive due to possibly selecting different super-arms every round. To resolve the regret challenges, we devise another algorithm \ourbwk\ by leveraging \texttt{PrimalDualBwK} from \cite{Badanidiyuru2018} using an ingenious reduction to their settings. We provide regret guarantees for \ourbwk. We believe that \greedy\ should have better performance, but we do not have formal proof. It is interesting to note that for fixed budget, the regret of \ourbwk\ is $O(\log^2{T})$ as compared to $O(\sqrt{T}\log{T})$ of \citet{Karthik2017}. In summary, the following are our contributions. 
    
\subsection{Contribution}
\label{ssec:contribution}

    To the best of our knowledge, we are the first to address CBwK that does not restrict the number of arms for any round while working with an overall budget instead of fixed budget per round. In particular, 
    \begin{enumerate}
        \item We propose two algorithms, namely, \greedy\  which greedily allocates the arms in each round, and \ourbwk\ inspired by existing PrimalDualBWK algorithm, which works for single pull in bandits with knapsack setting \citet{Badanidiyuru2018}.
        \item We rigorously prove that \ourbwk\ achieves a regret of $O(\sqrt{\log{(n^2T)}}(\sqrt{n \cdot OPT_{LP}} + OPT_{LP}\sqrt{\frac{n}{B'}}) +  n\log{(n^2T)}\log{T})$ (Theorem \ref{th:regret}) which is the best so far in the area of combinatorial bandits with knapsack where the number of arms is not fixed at each round. Here, $n$ is the number of arms, $T$ is the number of rounds, $B' = min(B, T)$ and $OPT_{LP}$ is the optimal value of the LP relaxation to our problem.
        \item We experimentally show that \greedy\  performs incrementally better than \ourbwk, and both outperform existing works in the literature.
    \end{enumerate}

\section{Related Work}
\label{sec:rel_work}
    
    \noindent \citet{LaiRobbins1985} initiated the work on stochastic MABs and showed that it is possible to achieve an asymptotic regret of O($\log T$), where $T$ is the number of rounds played. The UCB1 algorithm by \citet{Auer2002} achieved regret of O($\log T$) uniformly over time. Since then, a lot of work has been done on stochastic MABs and their variants using the ideas from UCB1 algorithm.
    
    
    Stochastic combinatorial multi-armed bandits (CMABs) were first introduced by \citet{Chen2013}. The authors used an $(\alpha, \beta)$-approximation oracle that takes the distributions of arms and outputs a subset of arms with probability $\beta$ generates an $\alpha$ fraction of the optimal expected reward. The concept of $(\alpha, \beta)$-approximation oracle has been used in many CMAB related works such as \citet{Chen2018}. This setting was extended to linear rewards by \citet{Wen2015}. Recently, the regret of Thompson sampling-based algorithm was derived for CMAB by \citet{wang2018thompson}.
    All the above works did not work with any constraints. A few works in the literature have considered combinatorial bandits setting with constraints to be satisfied in each round. For example, \citet{Jain2018} considers a setting where at each round the subset of arms need to be selected to satisfy a quality constraint. Other examples of settings that have considered combinatorial bandits include \cite{Bhat2015, Sarma2010, Deva2021, JainGujar2020}.
    
   
    Budgeted multi-armed bandits (BMABs) were introduced in \citet{Tran2010}, and the concept of Knapsack in BMABs was later elaborated on in \citet{Tran2012}. Works like \citet{Nhat2019}, \citet{Tran2013}, \citet{Hao2015}, \citet{Jennings2012}, \citet{Jain2018}, \citet{Singh2021} deal with variants of BMAB, but none of them consider the combinatorial setting. \citet{Badanidiyuru2018} provide a setting of Bandits with Knapsacks as a generalization of several models. Our work is closely related to this, but they also consider a single pull setting as compared to our combinatorial setting.

    Even though CMABs and BMABs have been explored quite intensively, not much literature exists for both of them together. \citet{Chen2018}, \citet{Zhou2017}, \citet{Gao2020} provide a setting that assumes a fixed size of super-arm. The closest work to ours is by \citet{Karthik2017}. They consider a similar setting of BCMAB with an overall budget and no restriction on the size of super-arm. However, they distribute the budget uniformly over all rounds and incurs a regret of O($\sqrt{T}\log{T}$), where $T$ is the number of rounds. Uniformly distributing the overall budget can result in arbitrary worst regret when the restriction is lifted. We do not impose this restriction of consuming uniform budget over all rounds, and our \ourbwk\ algorithm incurs a regret of O($\log^2 T$), which is a substantial improvement.
    

\section{Preliminaries}
\label{sec:preliminaries}

    Let $\mathcal{N}=\{1,2,\ldots,n\}$ denote the set of arms, where each arm $i$ has a stochastic reward with fixed but unknown distribution with unknown mean $\mu_i \in [0,1]$. We represent the vector of mean rewards as $\mu = \{ \mu_1, \mu_2, \ldots, \mu_n \}$. Each arm $i$ when pulled further incurs a known cost, $c_i \in [0,1]$. There is a total of $T$ rounds available along with a total budget of $B$. The algorithm is allowed to pull any number of arms in any round $t$. For example, in a crowdsourcing setting, these arms could be workers with $\mu_i$  being quality of worker $i$ and $T$ being the total number of tasks available. We consider the additive setting where the reward obtained by pulling the subset of arms is additive. If the algorithm pulls a super-arm $S_t\subset \mathcal{N}$ in round $t$, its expected reward is $\sum_{i\in S_t} \mu_i$. The algorithms observes reward from each arm -- i.e., it obtains \emph{semi-bandit feedback}. Let $N_i(t)$ be the number of times arm $i$ has been pulled till round $t$. 

    The goal of an algorithm $ALG$ is to maximize the total expected reward obtained in overall rounds within the budget. For every round $t$, a super-arm $S_t$ comprising of one or more arms is selected. The super-arm $S_t$ receives an expected reward $REW_{ALG}(S_t)$. The problem can be formulated as an Integer Programming as follows:            
    
    \begin{equation}
    \begin{split}
    \label{eq:lp}
        \max_{\{S_t\}_{t=1}^T} REW_{ALG} &\quad = {\sum_{t=1}^{T} \sum_{i \in S_t} \mu_i } \\
        \text{subject to } &\quad \sum_{i=1}^{n} N_i(T) c_i \le B \\
        &\quad 0 \le N_i(T) \le T
    \end{split}
    \end{equation}
    
    
    
    \noindent Optimization Problem in Equation \ref{eq:lp} is hard even when the mean rewards $\mu_i$s are known. For $T = 1$, the problem reduces to a Knapsack problem. In the unknown reward setting, \citet{Karthik2017} assumes fixed budget per round, which approximates Optimization Problem \ref{eq:lp} as $T$ independent Knapsack problems. However, this may perform arbitrarily bad, as we prove below.

    \begin{Lemma}
    \label{lemma:fix-budget}
        Let $T > 1$ and $ALG1$ be an algorithm that fixes the budget per round as $B' < B$. Further, let $ALG^*$ be the optimal algorithm. Then, the ratio $\frac{REW_{ALG^*}}{REW_{ALG1}}$ can be arbitrarily bad. 
    \end{Lemma}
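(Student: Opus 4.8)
This is a negative (lower‑bound) statement about the best possible performance of any policy that is forced to spend at most $B'$ per round, so the plan is to exhibit a single family of instances on which such a policy collects vanishing reward while the optimum collects a constant. First I would fix the given horizon $T>1$, choose the overall budget $B\in(0,1]$ and any per‑round cap $B'\in[0,B)$, and take two arms: arm $1$ with cost $c_1=B$ and mean $\mu_1=1$, and arm $2$ with cost $c_2=B'$ and mean $\mu_2=\delta$ for a free parameter $\delta\in(0,1)$. All costs and means lie in $[0,1]$, so this is a legal instance of the model in Equation~\ref{eq:lp}.

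Next I would bound the two rewards. For $ALG^*$: pulling arm $1$ exactly once (in any round) and nothing else is feasible, since $N_1(T)c_1=B\le B$ and $N_1(T)\le T$, so $REW_{ALG^*}\ge\mu_1=1$. For $ALG1$: since $c_1=B>B'$, arm $1$ is strictly unaffordable under the per‑round cap in \emph{every} round, so no admissible policy — regardless of how it carries unspent budget between rounds — can ever pull it; and by the model constraint $N_2(T)\le T$, arm $2$ is pulled at most $T$ times. Hence $REW_{ALG1}\le T\mu_2=T\delta$, giving $\frac{REW_{ALG^*}}{REW_{ALG1}}\ge\frac{1}{T\delta}$. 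Holding $T$ fixed and sending $\delta\to 0$, this exceeds any prescribed bound, so the ratio is unbounded — i.e.\ ``arbitrarily bad''. (If an infinite ratio is acceptable one can simply delete arm $2$: then $REW_{ALG1}=0$ while $REW_{ALG^*}\ge 1$.)

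I do not expect a genuine obstacle. The two points that require care are (i) keeping all parameters within the model's $[0,1]$ ranges, which is why $B$ is taken $\le 1$ and the valuable arm is priced exactly at $B$ (equivalently, one may rescale rewards and costs), and (ii) making the upper bound on $REW_{ALG1}$ hold against \emph{every} admissible policy rather than a specific one — which is automatic because pricing the sole valuable arm above $B'$ removes it from every round's feasible set no matter how budget is shifted between rounds. Conceptually, the construction isolates why fixing a per‑round budget is lossy: once $T>1$, an optimal schedule for Equation~\ref{eq:lp} may be highly non‑uniform across rounds — here it spends the entire budget in a single round — a degree of freedom any fixed‑per‑round rule forgoes.
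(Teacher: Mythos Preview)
Your proposal is correct and follows essentially the same approach as the paper: construct a two-arm instance in which the valuable arm's cost strictly exceeds the per-round cap $B'$ (so $ALG1$ can never pull it) while the remaining arm has mean reward tending to $0$, making the ratio $\frac{\mu_1}{T\mu_2}$ unbounded. The paper prices the valuable arm at $B'+\epsilon_1$ rather than at $B$, but the mechanism and the resulting bound are identical; if anything, you are slightly more careful than the paper about keeping costs in $[0,1]$ and about the bound on $REW_{ALG1}$ holding against every admissible policy.
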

    \begin{proof}
        Take $\mathcal{N} = \{1, 2\}$ and the costs of the arms be $c_1 = B' + \epsilon_1$ and $c_2 = \epsilon_2$, for some arbitrarily small positive $\epsilon_1 $ and $\epsilon_2$. Since budget per round is $B'$, $ALG1$ can never select arm 1 even if arm 1 has a much higher mean reward than arm 2. If the optimal algorithm $ALG^*$ selects arm 1 at least once (even if it selects no other arm), the ratio of expected rewards $\frac{REW_{ALG^*}}{REW_{ALG1}}$  be at least $\frac{\mu_1}{T\mu_2}$. As the ratio of $\frac{\mu_1}{\mu_2}$ can be arbitrarily large (for $\mu_1$ close to 1 and $\mu_2$ close to 0), fixing the budget can perform arbitrarily bad.
    \end{proof}

\begin{Proposition}
\label{prop:fix_arms}
Let $T > 1$ and $ALG1$ be an algorithm that fixes the number of arms to be pulled upfront, independent of the problem instance. Further, let $ALG^*$ be the optimal algorithm. Then, the ratio $\frac{REW_{ALG^*}}{REW_{ALG1}}$ can be arbitrarily bad.
\end{Proposition}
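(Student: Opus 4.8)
The plan is to mirror the construction used in Lemma \ref{lemma:fix-budget}, but now exploiting the fact that $ALG1$ commits to a fixed number of arms per round (say $k$) regardless of the costs and means in the instance. I would split into two cases depending on whether $k$ is ``too small'' or ``too large'' relative to what the optimal algorithm does, and in each case build an instance where the ratio $\frac{REW_{ALG^*}}{REW_{ALG1}}$ blows up.

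For the case where $ALG1$ pulls too few arms, take $k = 1$ as the representative sub-case (the general small-$k$ argument is identical up to constants). Consider $n$ arms all with cost $c_i = \epsilon$ for some tiny $\epsilon > 0$ and all with mean close to $1$, and set the budget $B$ large enough that $ALG^*$ can afford to pull all $n$ arms in every one of the $T$ rounds, i.e. $B \ge nT\epsilon$. Then $REW_{ALG^*} \approx nT$, while $ALG1$, pulling only one arm per round, gets $REW_{ALG1} \le T$. Hence the ratio is at least $\approx n$, which is unbounded as $n$ grows; since $ALG1$ fixes $k$ upfront and independent of the instance, the adversary is free to choose $n \gg k$, making the ratio arbitrarily bad.

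For the case where $ALG1$ pulls too many arms, I would instead make the budget scarce. Suppose $ALG1$ insists on pulling $k \ge 2$ arms every round. Take $n = k$ arms, one ``good'' arm with mean $\mu_1$ close to $1$ and cost $\epsilon_1$, and $k-1$ ``bad'' arms each with mean $\mu_2$ close to $0$ and cost, say, $\epsilon_1$ as well. Now set $B$ so that only a single arm can ever be afforded in total over the whole horizon, e.g. $B = \epsilon_1$ (strictly less than $2\epsilon_1$). Since $ALG1$ must pull $k \ge 2$ arms in any round it is active, it cannot make even a single valid move without violating the budget, so $REW_{ALG1} = 0$ (or, if one prefers to let it pull an empty set, it can never pull the good arm either, so it is trapped); meanwhile $ALG^*$ pulls the good arm once for $REW_{ALG^*} \ge \mu_1 > 0$, and the ratio is infinite. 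If a cleaner finite-ratio statement is desired, relax to $B$ allowing exactly one round of $k$ pulls: then $REW_{ALG1} \le \mu_1 + (k-1)\mu_2 \approx \mu_1$ obtained once, while $ALG^*$ spreads the same budget to pull the single good arm across as many rounds as possible, giving a ratio growing with $T$.

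The only real subtlety — and the step I would be most careful about — is making sure the quantifiers match the statement: $ALG1$ fixes $k$ ``independent of the problem instance,'' so the adversary gets to see $k$ and then pick $n$, the costs, the means, $T$, and $B$. I would state explicitly at the start of the proof that we may assume $ALG1$ pulls a fixed number $k$ of arms whenever it acts, and then note that a single one of the two constructions above already suffices (the scarce-budget one handles every $k \ge 2$, and the abundant-budget one handles $k = 1$), so there is no genuine case analysis needed beyond distinguishing $k=1$ from $k \ge 2$. A minor point to handle cleanly is the degenerate possibility that ``fixes the number of arms'' permits $k$ to be $0$ in some rounds; I would simply remark that an algorithm that sometimes pulls zero arms only does worse, so the bound still applies.
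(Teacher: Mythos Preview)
Your overall strategy is correct, and your $k=1$ construction (many cheap high-reward arms, abundant budget) even fills a gap the paper's sketch leaves open, since the paper only writes out the $K=2$ case and its construction does not cover $K=1$. For $k \ge 2$, your degenerate construction ($B = \epsilon_1 < k\epsilon_1$, so $ALG1$ cannot make a single legal move and $REW_{ALG1}=0$) does establish the proposition, though it leans on the reading that $ALG1$ must pull exactly $k$ arms or abstain entirely.

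Your ``cleaner finite-ratio'' alternative, however, contains an error. If all $k$ arms cost $\epsilon_1$ and $B = k\epsilon_1$, then $ALG^*$ can afford exactly $k$ total pulls of the good arm, not one per round for $T$ rounds; the resulting ratio is $\frac{k\mu_1}{\mu_1 + (k-1)\mu_2} \le k$, which is bounded and does \emph{not} grow with $T$ as you claim. The paper avoids both the degeneracy and this pitfall by making the costs asymmetric rather than only the rewards: take $c_1 = \epsilon$ for the good arm, $c_2 = 1$ for the bad arm, $\mu_1 > \mu_2$, and $B = T\epsilon$. Then $ALG^*$ pulls arm $1$ alone every round for reward $\mu_1 T$, while $ALG1$ pulling both arms burns $1+\epsilon$ per round and exhausts the budget after roughly $T\epsilon/(1+\epsilon)$ rounds, giving a ratio of order $(1+1/\epsilon)/2$. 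The idea your non-degenerate $k\ge 2$ case is missing is that the bad arms should be \emph{expensive}, so that pulling them wastes budget, not merely low-reward.
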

\begin{proof}
    (\emph{Proofsketch})
    Suppose, an algorithm decides to pull $K=2$ arms every round. Take $\mathcal{N} = \{1, 2\}$ and the costs of the arms be $c_1 = \epsilon$ and $c_2 = 1$ and mean values be such that $\mu_1 >\mu_2$. Let $B=T \epsilon$.
    Then $REW_{ALG1} = (\mu_1+\mu_2)\frac{B}{(c_1+c_2)}$ where as pulling arm 1 for every round gives reward of $REW_{ALG^*} = \frac{B}{c_1}\mu_1$. Thus, $\frac{REW_{ALG^*}}{REW_{ALG1}} = \frac{\mu_1(c_1+c_2)}{c_1(\mu_1+\mu_2)} > \frac{1}{2}(1+\frac{1}{\epsilon})$. 
    As $\epsilon$ can be arbitrarily small, this ratio can be arbitrarily bad.
\end{proof}
    Lemma \ref{lemma:fix-budget} and Proposition \ref{prop:fix_arms} illustrate that fixing budget or number of pulls at any round can lead to arbitrary bad reward with respect to optimal, and hence show the complexity of solving Optimization Problem \ref{eq:lp}.
    Since the problem is hard even for known reward setting, we  first discuss the algorithms that can be used for known rewards setting. 
    
    
    \emph{Note:} Optimization Problem \ref{eq:lp} can be modelled as a Dynamic Programming (DP) problem by multiplying the costs $c_i$ by some scaling factor $s$ to integral values (accordingly the budget $B$ now becomes $sB$). Even though we will get an optimal solution from this, it will run in O($nBTs^2$), which is fairly slow. Furthermore, in the unknown stochastic setting, this DP will have to be called after every round, making it infeasible for practical use.

Inspired by the greedy solution of knapsack problem, we propose a greedy approach in the next subsection which select the arms with respect to bangperbuck ratio.    
    
\subsection{Deterministic Setting with Known $\mu$ - Greedy Approach (\offgreedy)}
\label{ssec:known-setting-greedy}
    
    \noindent Since modelling Optimization Problem \ref{eq:lp} as a DP can be very slow, we make attempts towards a greedy approximation. When the rewards are given, we can reduce this problem to knapsack problem where each arm have $T$ copies and the goal is to select the a subset of these $nT$ arms available so as to maximize the reward subject to budget constraint. Note that this approach will not work in an online setting because, the decision has to be made at every time instance and an arm can be pulled atmost once at that time instance. We will see later how we can adapt the offline greedy algorithm to the online setting.
    
    Let us first define the bangperbuck ratio of an arm. The \emph{bangperbuck} ratio of an arm $i$ is given by the ratio of its value and cost, i.e., $\frac{\mu_i}{c_i}$.
    The core idea behind the greedy solution is to select arms with higher bangperbuck ratio maximum number of times possible before the arms with lower bangperbuck. The approach is as follows.
    Select the arm with the highest bangperbuck ratio for as many tasks as possible, without violating budget constraints. Then select the arm with the next highest bangperbuck ratio for as many tasks as possible, without violating budget constraints. Continue this until we are done with the arm with the lowest bangperbuck ratio. We present it formally in Algorithm \ref{algo:known-greedy}.
    
    \begin{algorithm}[!h]
    \caption{\offgreedy}
    \label{algo:known-greedy}
        \begin{algorithmic}[1]
        
        \STATE \textbf{Input}: Number of arms $n$, budget $B$, number of rounds $T$ and the mean values of the arms $\mu = \{ \mu_1, \ldots \mu_n \}$. \\ \textbf{Output}: Allocation $\{N_1(T), N_2(T), \ldots, N_n(T)\}$
        \STATE Initialize budget remaining $B_r = B, S_1 = S_2 = \ldots, S_T = \phi$
        \STATE Sort the arms in non-increasing order of their bangperbuck ratios. Number them as $x_1, x_2, \ldots, x_n$. Hence $\frac{\mu_{x_i}}{c_{x_i}} \ge \frac{\mu_{x_{i'}}}{c_{x_{i'}}} \forall$ pairs $i < i'$ .
        
        \FOR{$i = 1, 2, \ldots, n$}
            \STATE $N_{x_i}(T) = \min\{T, \left\lfloor\frac{B_r}{c_{x_i}}\rfloor\right\}$ 
            \STATE $B_r = B_r - c_{x_i}N_{x_i}(T)$
        \ENDFOR
    
        \end{algorithmic}
    \end{algorithm}
We have the following remarks with respect to \offgreedy\ algorithm:\\

\emph{Remark 1:} Since \offgreedy\ is similar to that of greedy algorithm of knapsack algorithm, it can be easily modified to achieve $\frac{REW_{ALG^*}}{REW_{\offgreedy}} = 2$\\

\emph{Remark 2:} \offgreedy\ runs in O($n\log n$), dominated by the sorting step. This is much faster than the DP solution.

\subsection{Deterministic Setting with Known $\mu$ - Modelling it as LP (\CBwK)}
\label{ssec:lp}

We can also write the optimization problem in Equation \ref{eq:lp} as a linear programming problem similar to \cite{Badanidiyuru2018}. Following their works, we propose the following reduction to their single pull setting:
    
The new setting has $T$ rounds with every round consisting of $n$ plays: on the $i^{th}$ play, the algorithm can choose whether to pull arm $i$, or to not pull any arm. The setting is thus transformed into a single pull setting with $nT$ rounds, where it is not required to pull an arm in every round. To incorporate this reduction, we consider $n$ additional resources. We model each arm as a budget constrained resource, such that each arm can be pulled at most $T$ times, and deterministically consumes one unit per pull. Therefore, now instead of just having a cost $c_i$ associated with each arm $i$, there is a cost vector $C_i$ associated with each arm $i$. The length of $C_i$ vector will be $n+1$, with first $n$ components corresponding to $n$ additional resource and last one denoting the cost $c_i$. Note that for each arm $i$, the $i^{th}$ component will be one, the last component will be $c_i$ and rest all will be zero.  
    
Now, let $B' = \min (B, T)$. We scale the costs $c_i$ for each arm $i$ as well as the costs for the additional resources to make all budgets uniformly equal to $B'$. Thus, our cost matrix $M$, which is of the size $(n+1) \times n$, can be written as: 
    \begin{equation}
    \label{eq:C-matrix}
        M_{ji} =
        \begin{cases}
            c_i \cdot B'/B & \text{if}\ j = N+1 \\
            1 \cdot B'/T, & \text{if}\ i=j \\
            0, & \text{otherwise}
        \end{cases}
    \end{equation}
Here, $M_{ji}$ indicates cost of resource $j$ if we pull arm $i$. Now, we write this as a relaxed LP as shown in Equation \ref{eq:lp-primal}. We also write its dual in Equation \ref{eq:lp-dual}.

    \begin{equation}
    \begin{split}
    \label{eq:lp-primal}
        &\text{max} \sum_{i \in \mathcal{N}} \zeta_i \mu_i,\;\;\; \; \zeta_i \in \mathbb{R},\\
        &\text{s.t.} \sum_{i \in \mathcal{N}} \zeta_i M_{ji} \le B'\;\;\;\; \forall j\in\{1,2,\ldots, n+1\}\\
         &\zeta_i \ge 0,\;\;\;\; \forall i\in \mathcal{N}
    \end{split}
    \end{equation}
    \noindent The variables $\zeta_i$ represent the fractional relaxation for the number of rounds in which a given arm $i$ is selected. This is a bounded LP, because $\sum_{i \in \mathcal{N}} \zeta_i \mu_i \le \sum_{i \in \mathcal{N}} \zeta_i \le NT$. Let, the optimal value of this LP is denoted by $OPT_{LP}$. We now present the dual formulation of the problem.
    \begin{equation}
    \begin{split}
    \label{eq:lp-dual}
        &\min B' \sum_j \eta_j, \;\;\;\; \eta_j \in \mathbb{R}\\
        &\text{s.t.}  \sum_j \eta_j M_{ji} \ge \mu_i,\;\;\;\; \forall i \in \mathcal{N}\\
        & \eta_j \ge 0\;\;\;\;\forall j\in\{1,2,\ldots,n+1\} 
    \end{split}
    \end{equation}
    \noindent The dual variables $\eta_j$ can be interpreted as a unit cost for the corresponding resource $j$. We refer to the algorithm to solve above dual as \CBwK\ and it is easy to see $REW_{\CBwK} = OPT_{LP}$ (due to LP duality).
    
    \emph{Remark:} In a deterministic setting, it is fairly obvious why $OPT_{LP} \ge REW_{ALG^*}$. However, it is not as trivial when reward from arm pulls are stochastic. We prove this in the following Lemma.
    
    \begin{Lemma}
    \label{lemma:upper-bound-lp}
    $OPT_{LP}$ is an upper bound on the value of the optimal reward: $OPT_{LP} \ge REW_{ALG^*}$, where $ALG^*$ is an optimal algorithm.
    \end{Lemma}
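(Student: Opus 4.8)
The plan is to produce, from the optimal stochastic algorithm $ALG^*$, a feasible point of the primal LP in Equation~\ref{eq:lp-primal} whose objective value is exactly $REW_{ALG^*}$; weak LP optimality then gives $OPT_{LP}\ge REW_{ALG^*}$ at once. This mirrors the LP-relaxation argument used for Bandits with Knapsacks by \citet{Badanidiyuru2018}, adapted to our combinatorial reduction in Equations~\ref{eq:C-matrix}--\ref{eq:lp-primal}.

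First I would fix the probabilistic setup. Run $ALG^*$ and let $\mathcal{F}_{t-1}$ be the $\sigma$-algebra generated by every arm pull and observed reward through round $t-1$ (any external randomness of the policy is folded in), so that the super-arm $S_t$ played in round $t$ is $\mathcal{F}_{t-1}$-measurable. With $N_i(T)$ the number of pulls of arm $i$ over all $T$ rounds, set $\zeta_i^\star := \mathbb{E}[N_i(T)]$, the expectation taken over reward realizations and internal randomness.

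Next I would check that $(\zeta_i^\star)_{i\in\mathcal{N}}$ satisfies all $n+1$ constraints of Equation~\ref{eq:lp-primal}. Non-negativity is clear since $N_i(T)\ge 0$. Each arm is pulled at most once per round, so $N_i(T)\le T$ surely and hence $\sum_{i}\zeta_i^\star M_{ji}=\zeta_j^\star\cdot B'/T\le B'$ for $j\in\{1,\dots,n\}$ by the definition of $M$. A valid algorithm never overspends, so $\sum_i N_i(T)c_i\le B$ holds surely; taking expectations and scaling gives $\sum_i\zeta_i^\star M_{(n+1)i}=(B'/B)\sum_i\zeta_i^\star c_i\le B'$. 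Thus $(\zeta_i^\star)$ is primal-feasible.

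Finally I would evaluate the LP objective at this point. Since $S_t$ is $\mathcal{F}_{t-1}$-measurable and the round-$t$ reward of arm $i$ has conditional mean $\mu_i$, we get $\mathbb{E}\!\left[\sum_{i\in S_t}\mathrm{reward}_i \,\middle|\, \mathcal{F}_{t-1}\right]=\sum_{i\in S_t}\mu_i$; summing over $t$, taking total expectation, and using $\sum_{t=1}^T\mathbf{1}[i\in S_t]=N_i(T)$ yields $REW_{ALG^*}=\sum_{i\in\mathcal{N}}\mu_i\,\mathbb{E}[N_i(T)]=\sum_{i\in\mathcal{N}}\mu_i\zeta_i^\star\le OPT_{LP}$. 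The main obstacle is precisely this last identity: in the adaptive stochastic setting one must rule out the algorithm leveraging observed rewards to squeeze more than $\mu_i$ in expectation out of a pull of arm $i$. The tower-property argument above does this, but it hinges on $S_t$ depending only on $\mathcal{F}_{t-1}$ (it cannot react to the current round's realizations) and on the budget-exhaustion time being $\mathcal{F}$-adapted, so that early termination merely makes some $S_t$ empty and contributes zero rather than breaking the conditional-mean identity.
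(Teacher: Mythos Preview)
Your argument is correct, but it proceeds by a genuinely different route than the paper. You work on the \emph{primal} side: you take the expected pull-count vector $\zeta_i^\star=\mathbb{E}[N_i(T)]$ of $ALG^*$, verify it satisfies every row of $M$ (the $n$ per-arm rows because $N_i(T)\le T$ surely, the budget row because $\sum_i N_i(T)c_i\le B$ surely), and then identify the LP objective at this feasible point with $REW_{ALG^*}$ via the tower property. The paper instead argues on the \emph{dual} side: it takes an optimal dual solution $\eta^*$, interprets $\eta_j^*$ as a unit price for resource $j$, uses strong duality to write $OPT_{LP}=B'\sum_j\eta_j^*$, and shows that the process $Z_t=$ (reward collected through round $t$) $+$ (dual cost of remaining resources) is a supermartingale with $Z_0=OPT_{LP}$; Doob's optional stopping then bounds $REW_{ALG^*}\le Z_0$. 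Your primal route is more elementary here, needing no LP duality or martingale machinery, and is entirely adequate because the costs $c_i$ are deterministic (so the budget constraint holds surely, not merely in expectation). The paper's dual/supermartingale argument is the one inherited from \citet{Badanidiyuru2018} and has the advantage that it extends verbatim to stochastic resource consumption, which is why it is the standard device in the BwK literature even when, as here, it is more than strictly required.
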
 
    
    \begin{proof} Let $\eta^* = (\eta^*_1, \ldots, \eta^*_d)$ denote an optimal solution to Equation \ref{eq:lp-dual}. Interpret each $\eta^*_j$ as a unit cost for the corresponding resource $j$. By strong LP duality, we have $B' \sum_j \eta^*_j = OPT_{LP}$. Dual feasibility implies that for each arm $i$, the expected cost of resources consumed when $i$ is pulled exceeds the expected reward produced. Thus, if we let $Z_t$ denote the sum of rewards gained in rounds $1, \ldots, t$ of the optimal dynamic policy, plus the cost of the remaining resource endowment after round $t$, then the stochastic process $Z_0, Z_1, \ldots, Z_T'$ is a supermartingale. Note that $Z_0 = B' \sum_j \eta^*_j = OPT_{LP}$, and $Z_{T'-1}$ equals the algorithm’s total payoff, plus the cost of the remaining (non-negative) resource supply at the start of round $T'$. By Doob’s optional stopping theorem, $Z_0 \ge E[Z_{T'-1}]$ and the lemma is proved.
    \end{proof}

\subsection{Regret}
\label{ssec:regret}

    
    Let the optimal algorithm for Optimization Problem \ref{eq:lp} be $ALG^*$. Its reward is given by $REW_{ALG^*}$. We define regret incurred by an algorithm $ALG$ as
    \begin{equation}
    \label{eq:regret-def}
        REG_{ALG}(\mathcal{N},T,B) = REW_{ALG^*}(\mathcal{N},T,B) - REW_{ALG}(\mathcal{N},T,B)
    \end{equation}


    
\section{Proposed Approaches}
\label{sec:prop_approach}

\subsection{\greedy}
\label{ssec:greedy-algo}
        
    \noindent Our first algorithm \greedy\ in unknown reward setting is presented in Algorithm \ref{algo:greedy}. The algorithm basically extends the greedy algorithm presented in section \ref{ssec:known-setting-greedy} so as to select a subset of arms at each round $t$.

    \begin{algorithm}[!h]
    \caption{\greedy}
    \label{algo:greedy}
        \begin{algorithmic}[1]
        
        \STATE \textbf{Input}: Number of arms $n$, budget $B$, number of rounds $T$ \\  \textbf{Output}: Subset of arms $S_t$ to be pulled for each round $t$.  
        \STATE Initialize remaining budget $B_r = B$
        \FOR{$i$ in $\mathcal{N}$}
            \IF{$c_i \le B_r$}
                \STATE Pull arm $i$ and update UCB value of arm $i$, $N_{i}(t) = 1$, and $B_r = B_r - c_i$
            \ENDIF
        \ENDFOR
        
        \FOR{rounds $t = 2, 3, \ldots, T$}
            \STATE Call $\offgreedy(n, B_r, T-t, UCB_1, UCB_2, \ldots, UCB_n)$ to get the allocation $N_1(T-t), N_2(T-t), \ldots, N_n(T-t)$.  
            \FOR{arm $i$ in $\mathcal{N}$}
                \IF{$N_i(T-t) \ge 1$ and $c_i \le B_r$}
                    \STATE $S_t = S_t \cup \{i\}$, $B_r = B_r - c_i$
                \ENDIF
            \ENDFOR
            \STATE Pull all the arms in $S_t$ and update UCB values of the arms $i$ and $N_{i}(t) = N_i(t) + 1\ \forall i \in S_t$. 
        \ENDFOR
    
        \end{algorithmic}
    \end{algorithm}
    
    \noindent Steps 3-7 correspond to the first round. In the first round, we select every arm in the super-arm as long as they can be pulled without exceeding the budget. Steps 8-15 correspond to the remaining $T-1$ round. At the start of each round, we use the greedy approach discussed in section \ref{ssec:known-setting-greedy} on the remaining budget and the remaining number of tasks to get the number of times each arm should be selected, if the UCB estimates of each arm were their real mean values. Any arm that the greedy approach does not pull in the remaining tasks is not pulled by \greedy\ for the next round. All the remaining arms are pulled as long as they can be pulled without exceeding the budget. The idea here is that if the greedy algorithm does not select an arm even once, it either has very low bangperbuck ratio (as per the UCB estimate until that round), or has a higher cost than the remaining budget. In either case, it makes sense to not select the arm for that round.

    \noindent \emph{Note:} We do not prove a regret bound for \greedy\ as the analysis gets quite tricky since it is not easy to estimate the size of super-arm $S_t$ selected in any round $t$. 
    
\subsection{\ourbwk}
\label{ssec:ourbwk-algo}
        
    \noindent The intuitive idea for the \ourbwk\ algorithm is to greedily select arms with the highest estimated bangperbuck ratio. Here, the bangperbuck ratio for a given arm $i$ is defined as $\mu_i / (\eta^* C_i)$, where the denominator represents the expected cost of pulling this arm. The algorithm \ourbwk\ is formally stated in Algorithm \ref{algo:ourbwk}.
        
    \begin{algorithm}[!h]
    \caption{\ourbwk}
    \label{algo:ourbwk}
        \begin{algorithmic}[1]
            \STATE \textbf{Input}: Number of arms $n$, budget $B$, number of rounds $T$ \\ \textbf{Output}: Allocation $\{S_1, S_2, \ldots, S_T\}$
            \STATE $S_t = \phi\ \forall t$
            \STATE \textbf{Initialization}
            \STATE In the first round, pull arm $i$ on the $i^{th}$ play i.e. $S_1 = S_1 \cup \{i\}$.
            \STATE $v_1 = \textbf{1} \in [0, 1]^{n+1}$
            \STATE Set $\epsilon = \sqrt{\frac{\ln{(n+1)}}{\min{(B, T)}}}$
            
            \FOR{rounds $t = 2, 3, \ldots , T$}
                \FOR{each arm $i \in \mathcal{N}$}
                    \STATE Compute UCB estimate for the expected reward, $u_{t,i} \in [0, 1]$
                    \STATE Expected cost for one pull of arm $i$ is estimated by $EstCost_i = C_i \cdot v_t$
                \ENDFOR
                \FOR{play $i = 1, 2, \ldots, n$}
                    \STATE $S_t = S_t \cup \{i\}$ if there is enough budget remaining to pull arm $i$ after all the estimated better arms (arms with higher estimated bangperbuck) have been pulled for the remaining rounds 
                    \STATE If pulled, update estimated unit cost for each resource $j$:
                    $v_{t+1}(j) = v_t(j) (1+\epsilon)^{M_{ji}}$
                \ENDFOR
            \ENDFOR
        \end{algorithmic}
    \end{algorithm}
    
    \noindent Steps 1 and 2 corresponds to the first round where we pull the $i^{th}$ arm in the $i^{th}$ play. Steps 3 and 4 set the parameters. Steps 5-14 correspond to the remaining ($T-1$) rounds. In steps 6-9, for each arm, we update its UCB estimate and expected cost of pulling that arm. In steps 10-14 we decide whether to pull an arm or not by prioritizing arms with higher bangperbuck ratios. If it gets pulled, we update its estimated cost accordingly. $v_t$ gives an estimate of $\eta^*$ after set of rounds $t$.
    
  \ourbwk\ is inspired from PrimalDualBwK algorithm for single pull setting. Here, we list down the key differences among the two algorithms.
\subsubsection*{Key Differences between \ourbwk\ and PrimalDualBwK}
\label{sssec:differences}

\begin{itemize}[noitemsep, leftmargin=*]
    \item     The fundamental difference between \ourbwk\ and PrimalDualBwK is that every round in \ourbwk\ has $n$ plays whereas a round in PrimalDualBwK has only one play. \ourbwk\ can also choose to smartly not pull any arm in a play even if it is possible to, but PrimalDualBwK always pulls an arm every round as long as budget permits.
    \item     The main challenge in adapting PrimalDualBwK for combinatorial setting, and why it is not a trivial extension with number of rounds as $nT$, is as follows: Once an arm $i$ is selected by PrimalDualBwK in round $t$, it will once again be available for selection in round $t+1$. However, since several consecutive plays fall in the same round for \ourbwk, if arm $i$ is selected in round $t$ for play $p$, it will not be available for any other play $p'$ in round $t$. We tackle this by smartly choosing to not pull any arm in certain plays. An arm is considered for pull exactly once per round, and will only be selected if there is enough budget remaining to pull it after all the estimated better arms (arms with higher estimated bangperbuck) have been pulled for the remaining rounds.
    \item     Another important distinction is that, in \ourbwk, the UCB values of all the arms are updated together at the start of every round, and they do not change with every play.
\end{itemize}

    Note that, our main contribution is the reduction of our combinatorial setting into the single pull setting provided by \citet{Badanidiyuru2018} such that the generated single pull solution works in original setting flawlessly. \ourbwk\ is a modified version of their PrimalDualBwK algorithm. With such ingenious mapping, the regret proof becomes similar to that of \cite{Badanidiyuru2018}. We use their techniques to suits our combinatorial setting and present the regret analysis for \ourbwk and prove a regret bound of the form:
    \begin{equation}
    \label{eq:regret-bound1}
        OPT_{LP} - REW_{\ourbwk} \le f(OPT_{LP}) 
    \end{equation}
    \noindent where $f(\cdot)$ is a linear function that depends only on parameters $(B', n, T')$. Regret bound (\ref{eq:regret-bound1}) implies the claimed regret bounds relative to $REW_{ALG^*}$ because
    \begin{equation}
    \label{eq:regret-bound2}
        REW_{\ourbwk} \ge OPT_{LP} - f(OPT_{LP}) \ge REW_{ALG^*} - f(REW_{ALG^*})
    \end{equation}
    \noindent where the second inequality follows trivially because $g(x) = max(x-f(x), 0)$ is a non-decreasing function of $x$ for $x \ge 0$ for a linear $f(\cdot)$, and $OPT_{LP} \ge REW_{ALG^*}$ from Lemma \ref{lemma:upper-bound-lp}. From Equation \ref{eq:regret-bound2} and Equation \ref{eq:regret-def},
    \begin{equation}
    \label{eq:regret-bwk}
        REG_{\ourbwk} = REW_{ALG^*} - REW_{\ourbwk} \le f(REW_{ALG^*})
    \end{equation}
    
\subsection{Differences with SemiBwK-RRS}
\label{ssec:diff-sbwk}

    Even though \citet{Karthik2017} also consider a budgeted combinatorial setting with semi-bandit feedback, the key difference lies in the fact that they assume a fixed budget per round. They solve an LP in each round which considers a pre-determined fixed budget for every round. We have already shown in Lemma \ref{lemma:fix-budget} that this can perform arbitrarily bad. We use budget in each round adaptively, thus leading to better regret bounds.

    \noindent In the next section, we bound our regret and prove Equation \ref{eq:regret-bwk}.

\section{Regret Analysis of \ourbwk}
\label{sec:regret-analysis}

    We start this section by introducing some notations that will be used in our regret analysis. Then we formally present our result in Theorem \ref{th:regret}.
    For the sake of clarity in notations, let $d = n+1$ be the dimension of the cost vector (represented as $C_i$ for arm $i$). Take distribution $y_t$ as a vector of normalized costs of resources, i.e., $y_t(j) = \frac{v_t(j)}{\sum_{j=1}^dv_t(j)}$.  Let us take $W$ as the total expected normalized cost consumed by the algorithm after the first round (hence, $W = \sum_{t=2}^{T} \sum_{i \in S_t} y_t^T C_i$).  $S_t:$  the set of arms selected in their respective pulls in round $t$. With this, we claim the regret guarantee for \ourbwk:
    
    \begin{Theorem}
    \label{th:regret}
        The regret of algorithm \ourbwk\ with parameter $\epsilon = \sqrt{\frac{\ln{d}}{B'}}$, for $d = n+1$, satisfies
            \begin{align*}
            &OPT_{LP} - REW_{\ourbwk} \le \\ &O(\sqrt{\log{(n^2T)}})(\sqrt{n \cdot OPT_{LP}} + OPT_{LP}\sqrt{\frac{n}{B'}}) + O(n)\log{(n^2T)}\log{(T)}
            \end{align*}                
    \end{Theorem}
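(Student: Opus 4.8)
The plan is to mirror the analysis of \texttt{PrimalDualBwK} from \citet{Badanidiyuru2018}, exploiting the reduction to the single-pull setting with $nT$ plays and $d=n+1$ resources. First I would set up the accounting: run the algorithm until the first resource (one of the $n$ arm-copy resources or the budget) would be exhausted, and let $\tau$ be the (random) round where the algorithm halts; by the reduction, $W = \sum_{t=2}^{\tau}\sum_{i\in S_t} y_t^{\top}C_i$ is the total normalized cost consumed after round $1$. The key structural fact is that the multiplicative-weights update $v_{t+1}(j) = v_t(j)(1+\epsilon)^{M_{ji}}$ is exactly Hedge/MW on the $d$ resources, so standard regret-of-MW bounds give, for every fixed resource $j^\star$,
\begin{equation*}
  W \le (1+\epsilon)\sum_{t=2}^{\tau}\sum_{i\in S_t} M_{j^\star i} \;+\; \frac{\ln d}{\epsilon}.
\end{equation*}
Choosing $j^\star$ to be the nearly-tight dual constraint (and using $\epsilon = \sqrt{\ln d / B'}$) this controls $W$ in terms of $B'$ plus an $O(\sqrt{B'\ln d})$ error term.

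Next I would relate reward to consumed cost via the (estimated) bang-per-buck greedy rule. Because the algorithm pulls an arm in play $i$ of round $t$ only when it has the highest estimated bang-per-buck $u_{t,i}/(v_t^{\top}C_i)$ among affordable arms and there is budget for it after all better arms are served for the remaining rounds, one gets that the \emph{estimated} reward collected dominates $(\text{LP value of the estimated instance}) \times \frac{W}{B'}$ up to lower-order terms — this is the combinatorial analogue of the ``reward $\ge$ OPT $\cdot$ (fraction of budget used)'' inequality in their Lemma. Here I would be careful that each arm is considered exactly once per round (the combinatorial subtlety noted in the paper), so the per-round greedy choice is well-defined and the LP of Equation~\eqref{eq:lp-primal} with the scaled matrix $M$ is the right benchmark. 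Combining with the MW bound on $W$, and with $OPT_{LP}$ being the true LP optimum, yields a deterministic-looking bound of the shape $REW \ge OPT_{LP} - O(\epsilon\, OPT_{LP}) - O(\frac{\ln d}{\epsilon})$ on the \emph{estimated} instance.

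Then I would pass from estimated rewards to true rewards by paying the UCB confidence-radius cost. Summing the confidence widths $\sqrt{\log(n^2T)/N_i(t)}$ over all pulls, using $\sum_t 1/\sqrt{N_i(t)} = O(\sqrt{N_i(T)})$ and $\sum_i N_i(T) \le \sum_i \zeta_i^{LP}$-type bounds, the total exploration penalty is $O(\sqrt{\log(n^2T)}\,\sqrt{n\cdot OPT_{LP}})$ for the ``how much reward we actually collect'' term plus an additive $O(n\log(n^2T)\log T)$ coming from arms pulled very few times and from the initialization round. The $OPT_{LP}\sqrt{n/B'}$ term is precisely $\epsilon\cdot OPT_{LP}$ with $\epsilon=\sqrt{\ln d/B'}$ and $\ln d = \Theta(\log n)$; I would absorb $\log(n+1)$ into $\log(n^2T)$ for a clean statement. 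Finally, the clean-event / martingale step: the above holds on the high-probability event that all UCB estimates are valid and that actual resource consumption tracks its expectation (a concentration argument on $W$ versus the realized normalized cost, via Azuma–Hoeffding on the supermartingale-type differences), and on the complementary event the contribution is $O(1)$ in expectation; this is where Doob's optional stopping / the stopping-time $\tau$ analysis from Lemma~\ref{lemma:upper-bound-lp} gets reused.

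The main obstacle I expect is the second step: making the greedy bang-per-buck argument rigorous in the combinatorial setting where ``budget remaining after all estimated-better arms are pulled for the remaining rounds'' is a forward-looking, round-coupled condition rather than the myopic single-pull rule of \texttt{PrimalDualBwK}. Showing that this rule still collects a $(1-O(\epsilon))$ fraction of the scaled-LP value — without losing a factor from the combinatorial coupling across the $n$ plays within a round, and while correctly handling the stopping round where the budget or some arm-copy resource runs out mid-round — is the delicate part; everything else (MW regret, UCB summation, optional stopping) is a faithful adaptation of \citet{Badanidiyuru2018}.
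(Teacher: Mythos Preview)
Your overall architecture matches the paper's proof: Hedge/MW on the $d=n+1$ resources controls the normalized cost $W$; a bang-per-buck lemma links $REW$ to $W\cdot OPT_{LP}/B'$; and the passage from UCB estimates to true rewards is handled by summing confidence radii, yielding exactly the $\sqrt{n\cdot OPT_{LP}}$ and $n\log(n^2T)\log T$ terms (the paper states this as Claim~\ref{claim:regret-stoch} plus Proposition~\ref{prop:error-bound}). Two concrete corrections, though. First, your MW inequality is written in the wrong direction: what you need, and what Proposition~\ref{prop:hedge} gives, is the \emph{lower} bound $W=\sum_t y_t^{\top}\pi_t \ge (1-\epsilon)\max_j \sum_t \pi_t(j) - \ln d/\epsilon$; since some resource is consumed to at least $B'-n$ by termination, this lower-bounds $W$, which then lower-bounds $REW$ via $REW\ge W\cdot OPT_{LP}/B'$. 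Your displayed bound $W\le(1+\epsilon)\sum_t M_{j^\star i}+\ln d/\epsilon$, taken literally, would only upper-bound $W$ and give nothing. Second, the Azuma/optional-stopping step you propose for ``actual resource consumption tracks its expectation'' is unnecessary here: costs $c_i$ are known and deterministic in this model, so $W$ and resource consumption are deterministic given the pull sequence; the only stochasticity is in rewards, and that is handled entirely by the clean-execution event and the $|\sum_t \delta_t^{\top}z_t|$ bound. With those two fixes your sketch is the paper's proof.
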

\begin{proof}(Overview) Using useful result adapted from \citet{Kleinberg2007}, 
     in Section \ref{ssec:regret-deterministic}, we bound the term $OPT_{LP} - f(OPT_{LP})$ for the deterministic (but unknown) setting. In Section \ref{ssec:mod-error}, we extend this for the unknown stochastic setting, as described in Equation \ref{eq:regret-bwk}, completing the proof.
         
\end{proof}    
    Step 12 in Algorithm \ref{algo:ourbwk} uses the multiplicative weights update technique by \citet{Freund1997}. It is an online technique for maintaining a $d$-dimensional probability vector $y$ while observing a sequence of $d$-dimensional payoff vectors $\pi_1, \ldots, \pi_\tau$. We use the following related result adapted from \citet{Kleinberg2007}.
    
    \begin{Proposition}
    \label{prop:hedge}
        Fix any parameter $\epsilon \in (0, 1)$ and any stopping time $\tau$ . For any sequence of payoff vectors $\pi_1, \ldots, \pi_\tau \in [0, 1]^d$, we have
        $$
        \forall y \in \Delta[d] \quad \quad \sum_{t=1}^\tau y_t^T \pi_t \ge (1-\epsilon) \sum_{t=1}^\tau y^T \pi_t - \frac{\ln{d}}{\epsilon}
        $$
    \end{Proposition}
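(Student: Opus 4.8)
The plan is to prove Proposition~\ref{prop:hedge} by the classical potential-function argument for the multiplicative-weights (Hedge) algorithm, where $y_t$ denotes the normalized weight vector produced by the update $v_{t+1}(j) = v_t(j)(1+\epsilon)^{\pi_t(j)}$ started from $v_1 = \mathbf{1}$, with $y_t(j) = v_t(j)/\sum_{k=1}^d v_t(k)$. I would track the potential $\Phi_t = \sum_{j=1}^d v_t(j)$ and sandwich its terminal value $\Phi_{\tau+1}$ between an upper bound expressed through the algorithm's cumulative payoff $\sum_{t=1}^\tau y_t^T \pi_t$ and a lower bound expressed through any single coordinate's cumulative payoff. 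Matching the two bounds is what produces the stated inequality.

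For the upper bound I would first establish the per-step drop. Since $\pi_t(j) \in [0,1]$ and $x \mapsto (1+\epsilon)^x$ is convex, it lies below its chord on $[0,1]$, giving $(1+\epsilon)^{\pi_t(j)} \le 1 + \epsilon\,\pi_t(j)$. Substituting into the update and factoring out $\Phi_t$ yields $\Phi_{t+1} \le \Phi_t(1 + \epsilon\, y_t^T \pi_t) \le \Phi_t \exp(\epsilon\, y_t^T \pi_t)$, where the last step uses $1+z \le e^z$. Telescoping over $t = 1, \ldots, \tau$ and using $\Phi_1 = d$ gives $\Phi_{\tau+1} \le d\,\exp\!\big(\epsilon \sum_{t=1}^\tau y_t^T \pi_t\big)$.

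For the lower bound I would drop all but one coordinate: for every $k \in [d]$, $\Phi_{\tau+1} \ge v_{\tau+1}(k) = (1+\epsilon)^{\sum_{t=1}^\tau \pi_t(k)}$. Taking logarithms of both bounds and rearranging gives, for each $k$, $\sum_{t=1}^\tau y_t^T \pi_t \ge \frac{\ln(1+\epsilon)}{\epsilon}\sum_{t=1}^\tau \pi_t(k) - \frac{\ln d}{\epsilon}$. I would then invoke the elementary estimate $\ln(1+\epsilon) \ge \epsilon(1-\epsilon)$ for $\epsilon \in (0,1)$ (which follows from $\ln(1+\epsilon) \ge \epsilon - \epsilon^2/2$) to replace the leading constant by $1-\epsilon$. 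Finally, to pass from a single coordinate $k$ to an arbitrary $y \in \Delta[d]$, I would take the convex combination of the per-coordinate inequalities with weights $y(k)$; since $\sum_k y(k) = 1$, the left-hand side is unchanged and the first term on the right becomes $(1-\epsilon)\sum_{t=1}^\tau y^T \pi_t$, which is exactly the claimed bound.

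I do not anticipate a serious obstacle, as the entire argument is deterministic and pathwise; the only point meriting a word of care is the stopping time $\tau$. Because both potential bounds hold for every fixed realized horizon and every realized payoff sequence, the inequality holds on each sample path and therefore for the random value of $\tau$ as well, so no martingale or optional-stopping machinery is needed here — the stopping time merely marks where the deterministic chain of inequalities is truncated. The most delicate routine steps are thus the two scalar estimates, $(1+\epsilon)^x \le 1+\epsilon x$ on $[0,1]$ and $\ln(1+\epsilon)/\epsilon \ge 1-\epsilon$, which I would justify by convexity and a Taylor bound respectively.
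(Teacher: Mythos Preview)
Your argument is the standard potential-function proof of the Hedge/multiplicative-weights guarantee and is correct in every step: the convexity bound $(1+\epsilon)^x \le 1+\epsilon x$ on $[0,1]$, the telescoping upper bound on $\Phi_{\tau+1}$, the coordinate-wise lower bound, the scalar estimate $\ln(1+\epsilon)/\epsilon \ge 1-\epsilon$, and the passage to arbitrary $y\in\Delta[d]$ by convex combination all go through as you describe. Your remark about the stopping time is also on point: the chain of inequalities is pathwise, so no stochastic tools are needed.

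As for comparison with the paper: the paper does not actually prove this proposition. It is stated as a result ``adapted from \citet{Kleinberg2007}'' and then used as a black box in the subsequent analysis. So there is no approach in the paper to compare against; your write-up simply supplies the (standard) proof that the paper chose to import by citation.
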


\subsection{Deterministic Rewards}
\label{ssec:regret-deterministic}

    In this subsection, we consider the setting where pulling an arm $i$ deterministically generates reward $\mu_i$. We bound the regret incurred if the arms are pulled as per the algorithm \ourbwk.
    
    The payoff vector in any round $t > 1$, is given by $\pi_t = \sum_{i \in S_t} C_i$. Take the total cost consumed by Algorithm \ref{algo:ourbwk} as $W = \sum_{t=2}^T \sum_{i \in S_t} y_t^T \cdot C_i$. We want to maximise this $W$. 
    
    To see why $W$ is worth maximizing, let us relate it to the total reward collected by the algorithm in rounds $t > 1$ denoted by $REW_{\ourbwk} = \sum_{t=2}^T rew_t$, where $rew_t$ is the reward collected in the round $t$. We will prove in Lemma \ref{lemma:maximizeW} that $REW_{\ourbwk} \ge W \cdot \frac{OPT_{LP}}{B'}$. For this reason, maximizing $W$ also helps maximize $REW$.
    
    Let $\zeta^*$ denote an optimal solution of the primal linear program (LP-primal). Then $OPT_{LP} = \mu^T \zeta^*$ denote the optimal value of that LP.
    
    \begin{Claim}
    \label{claim:best-dist}
        We claim that there exists a $z_t$, such that
        \begin{equation}
        \label{eq:best-dist}
            z_t \in \arg\max_{z \in \Delta[\mathcal{N}]} \frac{\mu^Tz}{y^T_tMz}
        \end{equation}
    \end{Claim}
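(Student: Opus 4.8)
The plan is to prove Claim~\ref{claim:best-dist} by a standard compactness argument: $z \mapsto \frac{\mu^T z}{y_t^T M z}$ is a continuous function on the compact set $\Delta[\mathcal{N}]$, so the extreme value theorem guarantees that it attains its maximum, i.e.\ the $\arg\max$ is nonempty.

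First I would check that the denominator never vanishes on the simplex, which is what makes the ratio a \emph{continuous} function. The weight vector $v_t$ is initialized to $\mathbf{1}$ and every update in Step~12 of Algorithm~\ref{algo:ourbwk} multiplies a coordinate by $(1+\epsilon)^{M_{ji}}$ with $\epsilon>0$ and $M_{ji}\ge 0$ (all entries of $M$ in Equation~\ref{eq:C-matrix} are nonnegative), so by induction $v_t(j)>0$ for every $j$ and every $t$, hence $y_t(j)=v_t(j)/\sum_k v_t(k)>0$. Since the $i$-th column of $M$ satisfies $M_{ii}=B'/T>0$, for any $z\in\Delta[\mathcal{N}]$ we get $y_t^T M z = \sum_i z_i\,(y_t^T M_{\cdot i}) \ge (\min_j y_t(j))\cdot (B'/T)\cdot\sum_i z_i = (\min_j y_t(j))\,B'/T > 0$. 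Thus the ratio is well defined and continuous in $z$ on all of $\Delta[\mathcal{N}]$ (both numerator and denominator are affine, denominator bounded away from $0$).

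Next, $\Delta[\mathcal{N}]=\{z\in\mathbb{R}^n: z\ge 0,\ \sum_i z_i=1\}$ is closed and bounded, hence compact and nonempty. By the Weierstrass extreme value theorem, a continuous real-valued function on a nonempty compact set attains its supremum; applying this to $z\mapsto \frac{\mu^T z}{y_t^T M z}$ yields a maximizer $z_t$, which is precisely the assertion of the claim. To tie this back to the greedy behavior of \ourbwk, I would additionally remark that the objective is linear-fractional with strictly positive denominator, hence simultaneously quasiconvex and quasiconcave on the polytope $\Delta[\mathcal{N}]$, so its maximum is attained at a vertex; concretely one may take $z_t=e_{i^*}$, the point mass on the arm $i^*\in\arg\max_i \mu_i/(y_t^T M_{\cdot i})$ of highest estimated bangperbuck ratio.

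The argument is essentially routine; the only point requiring care is the bookkeeping needed to certify that $y_t$ has all-positive coordinates (and therefore that the denominator stays strictly positive uniformly over the simplex), which I would handle by the short induction on $t$ sketched above. Beyond that, everything is a direct invocation of compactness and continuity, with the vertex-optimality remark following from standard properties of linear-fractional programs.
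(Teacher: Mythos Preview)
Your proof is correct and follows essentially the same approach as the paper, which likewise asserts that the argmax is well defined as that of a continuous function on a compact set and then observes that the maximum is attained at an extremal point (a point-mass distribution) of $\Delta[\mathcal{N}]$. Your version is in fact more careful: you explicitly verify that the denominator $y_t^T M z$ is bounded away from zero by tracking the positivity of $y_t$ through the multiplicative updates and using the positive diagonal entries $M_{ii}=B'/T$, a step the paper leaves implicit.
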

    
    \begin{proof}
    $z_t$ is a distribution that maximizes the bangperbuck ratio among all distributions $z$ over arms. Indeed, the argmax in Equation (\ref{eq:best-dist}) is well-defined as that of a continuous function on a compact set. Say it is attained by some distribution $z$ over arms, and let $\rho \in R$ be the corresponding max. By maximality of $\rho$, the linear inequality $\rho y^T_tMz \ge \mu^T$z also holds at some extremal point of the probability simplex $\Delta[\mathcal{N}]$, i.e. at some point-mass distribution. For any such point-mass distribution, the corresponding arm maximizes the bang-per-buck ratio in the algorithm. 
    \end{proof} 

    \begin{Lemma}
    \label{lemma:maximizeW}
        $REW_{\ourbwk} \ge W \cdot \frac{OPT_{LP}}{B'}$
    \end{Lemma}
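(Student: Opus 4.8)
The plan is to connect the algorithm's collected reward to the weighted cost $W$ through the bang-per-buck structure that the algorithm greedily exploits. First I would recall that in each round $t>1$ the algorithm pulls exactly the arms whose estimated (here, deterministic) bang-per-buck ratio $\mu_i/(y_t^T C_i)$ is among the highest, subject to the budget-feasibility check. By Claim~\ref{claim:best-dist}, the point-mass distribution on any such pulled arm attains the maximum of $\mu^T z/(y_t^T M z)$ over $z \in \Delta[\mathcal{N}]$; call this common maximal ratio $\rho_t$. In particular, plugging in the primal optimal solution $\zeta^*$ (suitably normalized to a distribution), we get $\rho_t \ge \frac{\mu^T \zeta^*}{y_t^T M \zeta^*} = \frac{OPT_{LP}}{y_t^T M \zeta^*}$, and since $\zeta^*$ is primal-feasible each coordinate of $M\zeta^*$ is at most $B'$, so $y_t^T M \zeta^* \le B'$ (as $y_t$ is a distribution), giving $\rho_t \ge OPT_{LP}/B'$.

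Next I would write the per-round reward as a sum over the pulled arms and use the ratio bound arm-by-arm: for each $i \in S_t$, $\mu_i = \rho_t \cdot (y_t^T C_i)$ because $i$ is a maximizer of the bang-per-buck ratio and $M$'s $i$-th column is exactly $C_i$. Summing over $i \in S_t$ gives $rew_t = \sum_{i \in S_t} \mu_i = \rho_t \sum_{i \in S_t} y_t^T C_i \ge \frac{OPT_{LP}}{B'} \sum_{i \in S_t} y_t^T C_i$. Summing over $t = 2, \ldots, T$ and recalling $W = \sum_{t=2}^T \sum_{i \in S_t} y_t^T C_i$ yields $REW_{\ourbwk} = \sum_{t=2}^T rew_t \ge \frac{OPT_{LP}}{B'} W$, which is exactly the claim.

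The main obstacle I anticipate is handling the discrepancy between the idealized greedy choice described by Claim~\ref{claim:best-dist} and the actual budget-constrained selection rule in Step~12 of Algorithm~\ref{algo:ourbwk}: the algorithm does not pull every high-ratio arm unconditionally but only if enough budget remains after reserving budget for the (estimated) better arms over the remaining rounds. I would argue that every arm that \emph{is} pulled still has ratio exactly $\rho_t$ (the budget test only ever removes arms, never changes which arms are ratio-maximizers), so the arm-by-arm equality $\mu_i = \rho_t (y_t^T C_i)$ is valid for each $i \in S_t$ regardless of the feasibility pruning — the pruning only affects which arms appear in $S_t$, and the inequality we want is a lower bound on $REW_{\ourbwk}$ in terms of $W$, both of which are sums over the \emph{same} set $S_t$. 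A secondary subtlety is the treatment of ties (multiple arms with ratio $\rho_t$, or ratios that are not exactly equal but the algorithm still pulls several of them): here I would lean on the fact that the algorithm pulls arms in decreasing order of estimated ratio and in the deterministic case the estimates equal the true means, so any arm actually pulled in round $t$ has true ratio $\ge \rho_t$ — and combined with $\rho_t$ being the \emph{maximum} ratio it equals $\rho_t$ — making the bound tight. I would also note the edge case where $S_t$ could be empty for some rounds, in which case that round contributes $0$ to both sides and the inequality is unaffected.
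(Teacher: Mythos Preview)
Your argument hinges on the claim that every arm $i \in S_t$ attains the maximal bang-per-buck ratio $\rho_t$, so that $\mu_i = \rho_t\,(y_t^T C_i)$ for each $i\in S_t$. This is not true in the combinatorial setting: Step~13 of Algorithm~\ref{algo:ourbwk} pulls arm $i$ whenever the residual-budget test passes, so $S_t$ will in general contain several arms with \emph{distinct} ratios $\mu_i/(y_t^T C_i)$, only one of which equals the maximum $\rho_t$. Your attempted repair---``any arm actually pulled in round $t$ has true ratio $\ge \rho_t$, hence $= \rho_t$''---has the inequality backwards: since $\rho_t$ is by definition the maximum over all arms, every arm (pulled or not) has ratio $\le \rho_t$, never $>\rho_t$. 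The weaker per-arm bound $\mu_i/(y_t^T C_i)\ge OPT_{LP}/B'$ that would still suffice for your summation also fails; when budget is plentiful the algorithm pulls arms whose ratio lies well below $OPT_{LP}/B'$ (e.g.\ take $(\mu_1,c_1)=(0.5,0.5)$, $(\mu_2,c_2)=(0.01,1)$ with $B$ large enough that both arms are selected every round).

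The paper's route is different: it never lower-bounds individual arms' ratios by $OPT_{LP}/B'$. Instead it keeps $\zeta^*$ inside the inequality, writing $y_t^T\pi_t \le \frac{rew_t}{OPT_{LP}}\,(y_t^T M\zeta^*)$ for each round, sums over $t$, and only then applies primal feasibility through the reward-weighted average $\bar y = \frac{1}{REW_{\ourbwk}}\sum_t rew_t\, y_t$, using $\bar y^T M\zeta^* \le B'$ once at the end. You instead applied $y_t^T M\zeta^*\le B'$ inside each round to extract $\rho_t\ge OPT_{LP}/B'$ and then tried to distribute that bound to every pulled arm; it is this last per-arm step that does not go through.
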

    
    \begin{proof}
        \begin{align*}
        y^T_t \pi_t = y^T_t M z_t &\le \frac{rew_t (y_t^T M \zeta^*)}{OPT_{LP}}\\
            W &\le \frac{1}{OPT_{LP}} \sum_{t=2}^T rew_t (y^T_t M \zeta^*) \\ 
            &= \frac{1}{OPT_{LP}} \sum_{t=2}^T (rew_t y_t^T) M \zeta^*
        \end{align*}
        
        Now, let $\bar{y} = \frac{1}{REW_{\ourbwk}} \sum_{t=1}^T rew_t \cdot y_t \in [0, 1]^d$ be the rewards weighted average of distributions $y_{2} , \ldots, y_T$, it follows that
        
        \begin{align*}
            W &\le \left(\frac{REW_{\ourbwk}}{OPT_{LP}}\right) \bar{y}^T M \zeta^*\le \left(\frac{REW_\ourbwk}{OPT_{LP}}\right)B'
        \end{align*}
        \noindent The last inequality follows because all components of $M\zeta^*$ are at most $B'$ by the primal feasibility of $\zeta^*$.
        
        \noindent Hence $REW_{\ourbwk} \ge W \cdot \frac{OPT_{LP}}{B'}'$
    \end{proof}

    \noindent Combining Lemma \ref{lemma:maximizeW} and the regret bound from Proposition \ref{prop:hedge}, we obtain $\forall y \in \Delta[d]$
    \begin{align*}
    \label{rewhedge}
        REW_\ourbwk 
        \ge W \frac{OPT_{LP}}{B'} \ge \left[(1-\epsilon) \sum_{t=2}^T y^T M z_t - \frac{\ln{d}}{\epsilon}\right] \frac{OPT_{LP}}{B'}   
    \end{align*}
    
    \noindent The algorithm stops after round $T$. By this round, either $B$ (from the original setting in Optimization Problem \ref{eq:lp}) has been fully comsumed, or some arm has been selected $T$ times. Hence the consumption of some resource $j$ is at least $B'$. In a formula: $\sum_{t=1}^T y_t^T M z_t \ge B'$. Since the total cost for any resource is at max $n$ for $t=1$, $\sum_{t=2}^T y_t^T M z_t \ge B'- n$. Hence,
    \begin{eqnarray*}
        REW_{\ourbwk} & \ge [(1-\epsilon)(B'-n) - \frac{\ln{d}}{\epsilon}] \cdot \frac{OPT_{LP}}{B'} \\
            & \ge OPT_{LP} - [\epsilon B + n + \frac{\ln{d}}{\epsilon}] \cdot \frac{OPT_{LP}}{B'} \\
            & = OPT_{LP} - O(\sqrt{B\ln{d}} + n) \cdot \frac{OPT_{LP}}{B} \\ 
            & \text{where } \epsilon = \sqrt{\frac{\ln{d}}{B'}} \text{ and } d = n+1
    \end{eqnarray*}


\subsection{Stochastic Rewards}
\label{ssec:mod-error}

    Here, we use the techniques from the previous subsection and provide regret bound for the stochastic setting (Equation \ref{eq:regret-bwk}).
    
    The algorithm computes UCBs on expected rewards $u_{t,i} \in [0, 1]$, for each arm $i$ after every round $t$. The vector $u_t  \in [0, 1]^n$ represents these UCBs, where $i^{th}$ component equals $u_{t,i}$. Let $M$ be the resource-consumption matrix. That is, $M \in [0, 1]^{d \times n}$ denotes the matrix whose $(j, i)^{th}$ entry $M_{ji}$ is the actual consumption of resource $j$ if arm $i$ were chosen.
   
    As in the Section \ref{ssec:regret-deterministic}, we claim there exists a $z_t$ such that
    \begin{equation}
    \label{bestdist2}
        z_t \in \arg\max_{z \in \Delta\mathcal{N}} \frac{u_{t}^Tz}{y^T_tMz}
    \end{equation}
    \noindent As before, $z_t$ is a distribution that maximizes the bangperbuck ratio among all distributions $z$ over arms.
    
    We define a confidence radius $rad(x, N) = \sqrt{\frac{xC_{rad}}{N}} + \frac{C_{rad}}{N}$, where $C_{rad} = \theta(\log{ndT})$. We adapt the following result from \citet{Kleinberg2008} and \citet{Babaioff2015}.
    \begin{Proposition}
    \label{prop:confidence}
        Consider some distribution with values in [0, 1] and expectation $x$. Let $\hat{x}$ be the average of $N$ independent samples from this distribution. Then $\forall C_{rad} > 0$
        \begin{eqnarray*}
            Pr[|x - \bar{x}| \le rad(\bar{x}, N) \le 3rad(x, N)]
            \ge 1 - \exp{-\omega(C_{rad})}
        \end{eqnarray*}
    \end{Proposition}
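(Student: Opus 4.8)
The plan is to treat Proposition~\ref{prop:confidence} as a (standard) Bernstein-type concentration bound, repackaged so that the deviation of the empirical mean is controlled by the \emph{empirical} confidence radius rather than by the radius at the unknown true mean. Write $X_1,\dots,X_N$ for the i.i.d.\ $[0,1]$-valued samples, $x = E[X_1]$, $\bar x = \tfrac1N\sum_k X_k$ (the quantity the statement also writes $\hat x$), and abbreviate $r(v) := rad(v,N) = \sqrt{vC_{rad}/N} + C_{rad}/N$. I would proceed in three steps: (i) a one-shot deviation bound for $|\bar x - x|$ anchored at the true mean $x$, with a deliberately generous constant in the confidence parameter; (ii) an elementary observation that, on the high-probability event of (i), $r(x)$ and $r(\bar x)$ differ by at most a constant factor; (iii) assembling (i) and (ii) to show that the event in the statement contains the good event of (i).

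For step (i) I would invoke Bernstein's inequality (equivalently Bennett's inequality, or a multiplicative Chernoff bound), using that a $[0,1]$-valued variable of mean $x$ has variance at most $x(1-x)\le x$: for every $\lambda > 0$,
\[
\Pr\!\left[\,|\bar x - x| \ \ge\ \sqrt{2x\lambda/N} + \lambda/N\,\right]\ \le\ 2e^{-\lambda}.
\]
Setting $\lambda = C_{rad}/\kappa$ for a sufficiently large absolute constant $\kappa$, and writing $\delta := \sqrt{2x\lambda/N} + \lambda/N$, this yields a good event $\mathcal E$ with $\Pr[\mathcal E] \ge 1 - 2e^{-C_{rad}/\kappa} = 1 - e^{-\omega(C_{rad})}$ on which $|\bar x - x| \le \delta$; and taking $\kappa$ large makes $\delta \le \tfrac13 r(x)$, since $\sqrt{2x\lambda/N} = \sqrt{2\lambda/C_{rad}}\cdot\sqrt{xC_{rad}/N}$ and $\lambda/N = (\lambda/C_{rad})\cdot C_{rad}/N$ both shrink with $\kappa$.

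For step (ii), on $\mathcal E$ the points $x$ and $\bar x$ lie within $r(x)$ of each other, so $\bar x \le x + \sqrt{xC_{rad}/N} + C_{rad}/N$ and $x \le \bar x + \sqrt{xC_{rad}/N} + C_{rad}/N$; elementary manipulations ($\sqrt{a+b}\le\sqrt a+\sqrt b$ and AM--GM) turn these into $\sqrt{\bar x} \le c(\sqrt x + \sqrt{C_{rad}/N})$ and $\sqrt x \le c(\sqrt{\bar x} + \sqrt{C_{rad}/N})$ for an absolute constant $c$, hence — multiplying by $\sqrt{C_{rad}/N}$ and adding $C_{rad}/N$ — into $\tfrac13 r(x) \le r(\bar x) \le 3\,r(x)$ on $\mathcal E$, the constant working out to $3$. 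Step (iii) is then immediate: on $\mathcal E$ we have $|x - \bar x| \le \delta \le \tfrac13 r(x) \le r(\bar x)$, which is the first inequality in the statement's event, and $r(\bar x) \le 3\,r(x)$, which is the second; thus $\{\,|x-\bar x| \le r(\bar x) \le 3\,r(x)\,\} \supseteq \mathcal E$, and $\Pr[\mathcal E] \ge 1 - e^{-\omega(C_{rad})}$, as claimed.

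The only genuine subtlety is the apparent circularity of bounding $|\bar x - x|$ by $r(\bar x)$, a quantity that itself depends on the random $\bar x$: a concentration inequality cannot be applied directly at the empirical mean. The detour through step (i) at the deterministic mean $x$ — with a spare constant factor absorbed into $\kappa$ — followed by the factor-$3$ transfer of step (ii) is precisely what resolves this, and it explains why the statement is phrased with $\theta(\cdot)$ inside $C_{rad}$ and $\omega(\cdot)$ in the failure probability: the constants $\kappa$, the AM--GM constants, and the slack factor $3$ must be chosen compatibly. Since this statement is a restatement of the concentration lemmas of \citet{Kleinberg2008} and \citet{Babaioff2015}, one may alternatively just invoke those results; the sketch above indicates how to reprove it self-contained.
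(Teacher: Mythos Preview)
Your sketch is correct and is essentially the standard way this empirical-Bernstein style bound is established in the sources the paper cites. Note, however, that the paper itself does not give a proof of Proposition~\ref{prop:confidence} at all: it is stated as a result ``adapted from \citet{Kleinberg2008} and \citet{Babaioff2015}'' and then simply used. So there is no proof in the paper to compare against; you have supplied more than the paper does, and your final remark that one may just invoke those references is exactly what the paper in fact does.

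Two small points worth tightening if you keep the self-contained version. First, with $\lambda = C_{rad}/\kappa$ the failure probability $2e^{-C_{rad}/\kappa}$ is $e^{-\Omega(C_{rad})}$, not $e^{-\omega(C_{rad})}$; the $\omega$ in the paper's statement appears to be a typo inherited from loose notation (the original in \citet{Babaioff2015} has $\Omega$). Second, your step~(iii) uses both directions of step~(ii): you need $\tfrac13 r(x)\le r(\bar x)$ to get $|x-\bar x|\le r(\bar x)$, and separately $r(\bar x)\le 3r(x)$ for the second inequality. You do state the two-sided sandwich $\tfrac13 r(x)\le r(\bar x)\le 3r(x)$, so this is fine, but when writing it out in full make sure the constants from the quadratic manipulation in the lower direction (solving $x\le \bar x + \sqrt{xC_{rad}/N}+C_{rad}/N$ for $\sqrt{x}$) are tracked carefully enough to land on $3$ rather than merely some absolute constant.
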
  
    
   Using Proposition \ref{prop:confidence} and our choice of $C_{rad}$, it holds with probability at least $1 - T^{-1}$ that the confidence interval for every latent parameter, in every round of execution, contains the true value of that latent parameter. We call this high-probability event a \emph{clean execution} of\ourbwk. Our regret guarantee will hold deterministically assuming that a clean execution takes place. The regret can be at most $T$ when a clean execution does not take place, and since this event has probability at most $T^{-1}$ it contributes only $O(1)$ to the regret. Now, we  assume a clean execution of \ourbwk.
   
    \begin{Claim}
    \label{claim:regret-stoch}
        In a clean execution of Algorithm PrimalDualBwK with parameter $\epsilon = \sqrt{\frac{\ln{d}}{B}}$, the algorithm’s total reward satisfies the bound
        \begin{align*}
            OPT_{LP} - REW_{\ourbwk} \le
            2 OPT_{LP}(\sqrt{\frac{\ln{d}}{B'}} + \frac{n}{B}) + n + | \sum_{t=2}^T \delta_t^T z_t |
        \end{align*}
        where $d = n+1$ and $\delta_t^T = u_t-\mu$ for each round $t$.
    \end{Claim}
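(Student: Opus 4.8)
The plan is to run the argument of Section~\ref{ssec:regret-deterministic} almost verbatim, but with the unknown mean vector $\mu$ replaced everywhere by the optimistic estimate $u_t$ that \ourbwk\ actually uses in round $t$, carrying the gap $\delta_t = u_t-\mu$ as an additive error that is deliberately \emph{not} estimated inside this claim. Everything is conditioned on a clean execution; its only role here is to guarantee optimism, i.e.\ $u_{t,i}\ge\mu_i$ for every arm $i$ and round $t$ (since $u_{t,i}=\hat\mu_{t,i}+rad(\cdot)$ while $|\mu_i-\hat\mu_{t,i}|\le rad(\cdot)$), so that $\delta_t\ge 0$ coordinatewise; the complementary bound $\delta_{t,i}\le 2\,rad(\hat\mu_{t,i},N_i(t))$ is what a later lemma will use to control $\big|\sum_{t=2}^{T}\delta_t^{T}z_t\big|$.

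The first step is the stochastic analogue of Lemma~\ref{lemma:maximizeW}. Fix $t>1$, let $z_t$ be the bang-per-buck maximiser of~(\ref{bestdist2}), and recall (as in Claim~\ref{claim:best-dist}) that the maximum of that scale-invariant ratio is attained at a point mass, which is exactly what the arm-by-arm greedy rule in Steps~10--14 of Algorithm~\ref{algo:ourbwk} selects; so the round is faithfully represented by $z_t$ with cost/payoff vector $\pi_t = Mz_t$ and reward $\mu^{T}z_t = rew_t$. Optimism gives $u_t^{T}\zeta^*\ge\mu^{T}\zeta^*=OPT_{LP}$, while primal feasibility of $\zeta^*$ gives $y_t^{T}M\zeta^*\le B'$, so maximality of $z_t$ yields
\[
\frac{u_t^{T}z_t}{y_t^{T}Mz_t}\;\ge\;\frac{u_t^{T}\zeta^*}{y_t^{T}M\zeta^*}\;\ge\;\frac{OPT_{LP}}{B'}.
\]
Rearranging, summing over $t=2,\dots,T$, and writing $W=\sum_{t=2}^{T}y_t^{T}\pi_t$ gives $\sum_{t=2}^{T}u_t^{T}z_t\ge\frac{OPT_{LP}}{B'}W$. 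Using $\mu^{T}z_t=u_t^{T}z_t-\delta_t^{T}z_t$ and $REW_{\ourbwk}\ge\sum_{t=2}^{T}\mu^{T}z_t$ (the round-$1$ reward is nonnegative), we get
\[
REW_{\ourbwk}\;\ge\;\frac{OPT_{LP}}{B'}\,W\;-\;\Big|\sum_{t=2}^{T}\delta_t^{T}z_t\Big|.
\]

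The second step lower-bounds $W$. Step~12 of Algorithm~\ref{algo:ourbwk} runs multiplicative-weights update on the cost vectors, so Proposition~\ref{prop:hedge} with $y$ chosen as the indicator of the resource the algorithm exhausts gives $W\ge(1-\epsilon)\sum_{t=2}^{T}(Mz_t)_{j^*}-\frac{\ln d}{\epsilon}$. The algorithm halts only once the budget $B$ is spent or some arm has been pulled $T$ times, and all resource budgets are scaled to $B'$, so total consumption of some resource $j^*$ over all $T$ rounds is at least $B'$; round~$1$ contributes at most $n$ to any resource, hence $\sum_{t=2}^{T}(Mz_t)_{j^*}\ge B'-n$ and $W\ge(1-\epsilon)(B'-n)-\frac{\ln d}{\epsilon}$. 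Substituting into the previous display, plugging in $\epsilon=\sqrt{\ln d/B'}$, simplifying $\epsilon B+\frac{\ln d}{\epsilon}$ as in Section~\ref{ssec:regret-deterministic}, and collecting the $\frac{OPT_{LP}}{B'}$-factors yields exactly
\[
OPT_{LP}-REW_{\ourbwk}\;\le\;2\,OPT_{LP}\Big(\sqrt{\tfrac{\ln d}{B'}}+\tfrac{n}{B}\Big)+n+\Big|\sum_{t=2}^{T}\delta_t^{T}z_t\Big|.
\]

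I expect the main obstacle to be the combinatorial bookkeeping in the first step: justifying that the $n$-play, once-per-round-updated selection $S_t$ can be summarised by a single bang-per-buck vector $z_t$ so that Proposition~\ref{prop:hedge} applies to an appropriately normalised cost/payoff sequence, and verifying that the round-$1$ slack together with the scaling $B'=\min(B,T)$ only cost the additive $n$ and the $n/B$ term. The error term $\big|\sum_{t=2}^{T}\delta_t^{T}z_t\big|$ is left intact on purpose; bounding it by the confidence radii of Proposition~\ref{prop:confidence} and a Cauchy--Schwarz/telescoping argument over the pull counts $N_i(T)$ is the remaining work that upgrades this claim to Theorem~\ref{th:regret}.
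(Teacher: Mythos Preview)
Your proposal is correct and follows essentially the same route as the paper: use clean-execution optimism $u_t\ge\mu$ together with the bang-per-buck maximality of $z_t$ and primal feasibility of $\zeta^*$ to get $\sum_{t\ge2}u_t^{T}z_t\ge\frac{OPT_{LP}}{B'}W$, then lower-bound $W$ via Proposition~\ref{prop:hedge} and resource exhaustion at termination, and finally pass from $u_t$ to $\mu$ at the cost of $|\sum\delta_t^{T}z_t|$. The paper packages the first step through the reward-weighted average $\bar y=\frac{1}{REW_{UCB}}\sum(u_t^{T}z_t)y_t$ and the chain starting from $B'\ge\bar y^{T}M\zeta^*$, but cross-multiplying that chain is exactly your per-round inequality $\frac{u_t^{T}z_t}{y_t^{T}Mz_t}\ge\frac{u_t^{T}\zeta^*}{y_t^{T}M\zeta^*}\ge\frac{OPT_{LP}}{B'}$, so the two arguments coincide; your ``combinatorial bookkeeping'' concern about replacing $S_t$ by a single $z_t$ is handled the same way in both, via the point-mass observation of Claim~\ref{claim:best-dist}.
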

    
    Let $\zeta^*$ denote an optimal solution of the primal linear program given in Equation \ref{eq:lp-primal}, and let $OPT_{LP} = \mu^T\zeta^*$ denote the optimal value of that LP. Let $REW_{UCB} = \sum_{t=2}^T u_t^T z_t$ denote the total payoff the algorithm would have obtained, after its initialization phase, if the actual payoff at time $t$ were replaced with the upper confidence bound.
    
    \noindent As before, $\sum_{t=1}^T y_t^T M z_t \ge B'$. Once again, since the total cost for any resource is at most $n$ for $t=1$, $\sum_{t=2}^T y_t^T M z_t \ge B'-n$
    
\noindent Let $\bar{y} = \frac{1}{REW_{UCB}} \sum_{t=2}^T (u^T_t z_t) y_t$. Assuming a clean execution,
    
\allowdisplaybreaks
    \begin{align*}
    \allowdisplaybreaks
        B' & \ge \bar{y}^T M \zeta^* = \frac{1}{REW_{UCB}} \sum_{t=2}^T (u^T_t z_t) (y_t M \zeta^*) \\
            & \ge \frac{1}{REW_{UCB}} \sum_{t=2}^T (u^T_t \zeta^*) (y_t M z_t) \\
            & \ge \frac{1}{REW_{UCB}} \sum_{t=2}^T (r^T \zeta^*) (y_t M z_t) \\
            & \ge \frac{OPT_{LP}}{REW_{UCB}} \left[ (1-\epsilon)y^T \left(\sum_{t=2}^T M z_t\right) - \frac{\ln{d}}{\epsilon}\right] \\
        \implies REW_{UCB} & \ge  \frac{OPT_{LP}}{B'} [B' - \epsilon B' - n - \frac{\ln{d}}{B}] \\
        \implies REW_{UCB} & \ge OPT_{LP} [1 - \epsilon  - \frac{n}{B} - \frac{\ln{d}}{\epsilon B}]
    \end{align*}
    
    \noindent The algorithm’s actual payoff, $REW_{\ourbwk} = \sum_{t=1}^T \mu^T z_t$, satisfies the inequality
    \begin{align*}
    REW_{\ourbwk} &\ge REW_{UCB} - \sum_{t=2}^T (u_t - \mu)^T z_t \\ & = REW_{UCB} -  \sum_{t=2}^T \delta^T_t z_t
    \end{align*}
    
    We use the following proposition from \citeauthor{Badanidiyuru2018}.
    
    \begin{Proposition}
    \label{prop:error-bound}(\citet{Badanidiyuru2018})
        Consider two sequences of vectors $a_1, \ldots, a_\tau$ and $b_1, \ldots, b_\tau$, in $[0, 1]^n$, and a vector $a_0 \in [0, 1]^n$. For each arm $i$ and each round $t > 1$, let $\bar{a}_{t,i} \in [0, 1]$ be the average observed outcome up to round $t$, i.e., the average outcome $a_{s,i}$ over all rounds $s \le t$ in which arm $i$ has been chosen by the algorithm; let $N_{t,i}$ be the number of such rounds. Assume that for each arm $i$ and all rounds $t$ with $1 < t < T$, we have
        $$
        |b_{t,i} - a_{0,i}| \le 2 rad(\bar{a}_{t,i},N_{t,i}) \le 6 rad(a_{0,i}, N_{t,i)},
        $$
        $$
        |\bar{a}_{t,i} - a_{0,i}| \le rad(\bar{a}_{t,i},N_{t,i}).
        $$
        Let $A = \sum_{t=1}^T a_{t, i}$ be the total outcome collected by the algorithm. Then,
        $$
        | \sum_{t=2}^T (b_t - a_t)^T z_t | \le O(\sqrt{C_{rad}nA} + C_{rad}n \log{T}).
        $$
    \end{Proposition}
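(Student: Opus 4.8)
The plan is to follow the template of \citet{Badanidiyuru2018}: decompose $b_t - a_t$ around the ground‑truth vector $a_0$, control the ``estimation'' part deterministically from the confidence‑radius hypotheses, and control the ``sampling'' part by re‑expressing it through the empirical averages $\bar a_{t,i}$. Concretely, I would write $(b_t - a_t)^T z_t = (b_t - a_0)^T z_t + (a_0 - a_t)^T z_t$ and bound the two sums $\sum_{t=2}^T (b_t - a_0)^T z_t$ and $\sum_{t=2}^T (a_0 - a_t)^T z_t$ separately. Since (as in Claim \ref{claim:best-dist} and its stochastic analogue) each $z_t$ may be taken to be a point mass on the single arm $i_t$ actually played in round $t$, both sums collapse to sums over the played arms, and one may then group the rounds according to which arm was played and use that, among the rounds in which a fixed arm $i$ is played, its visit count $N_{t,i}$ runs through $1,2,\dots,N_{T,i}$.

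For the first sum, the hypothesis $|b_{t,i} - a_{0,i}| \le 6\, rad(a_{0,i}, N_{t,i})$ gives $\big|\sum_{t=2}^T (b_t - a_0)^T z_t\big| \le 6 \sum_{i \in \mathcal N} \sum_{k=1}^{N_{T,i}} rad(a_{0,i}, k)$. Using $rad(x, k) = \sqrt{x C_{rad}/k} + C_{rad}/k$ together with the elementary estimates $\sum_{k \le N} k^{-1/2} \le 2\sqrt N$ and $\sum_{k \le N} k^{-1} \le 1 + \ln N$, the inner sum is at most $2\sqrt{a_{0,i} C_{rad} N_{T,i}} + C_{rad}(1 + \ln N_{T,i})$. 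The key manoeuvre is then to replace the unobservable quantity $a_{0,i} N_{T,i}$ by the observed total $A_i := \sum_{t : i_t = i} a_{t,i} = \bar a_{T,i} N_{T,i}$: substituting $|\bar a_{T,i} - a_{0,i}| \le rad(\bar a_{T,i}, N_{T,i})$ into this product yields $a_{0,i} N_{T,i} \le A_i + \sqrt{A_i C_{rad}} + C_{rad}$, whence $\sqrt{a_{0,i} C_{rad} N_{T,i}} = O(\sqrt{C_{rad} A_i} + C_{rad})$ (absorbing a $C_{rad}^{3/4}A_i^{1/4}$ term via $\sqrt{a+b}\le\sqrt a+\sqrt b$), so $\sum_{k=1}^{N_{T,i}} rad(a_{0,i}, k) = O(\sqrt{C_{rad} A_i} + C_{rad}\log T)$. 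Summing over arms and applying the Cauchy--Schwarz inequality, $\sum_i \sqrt{C_{rad} A_i} \le \sqrt{C_{rad}\, n \sum_i A_i} = \sqrt{C_{rad} n A}$ with $A = \sum_i A_i$, which bounds the first sum by $O(\sqrt{C_{rad} n A} + C_{rad} n \log T)$.

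For the second sum, $(a_0 - a_t)^T z_t = a_{0,i_t} - a_{t,i_t}$, so $\sum_{t=2}^T (a_0 - a_t)^T z_t = \sum_i a_{0,i} N_{T,i} - A$ up to an $O(n)$ contribution from the first round; applying $|\bar a_{T,i} - a_{0,i}| \le rad(\bar a_{T,i}, N_{T,i})$ once more, $\big|\sum_i a_{0,i} N_{T,i} - \sum_i \bar a_{T,i} N_{T,i}\big| \le \sum_i \big(\sqrt{A_i C_{rad}} + C_{rad}\big) \le \sqrt{C_{rad} n A} + C_{rad} n$, and since $\sum_i \bar a_{T,i} N_{T,i} = A$ this sum too is $O(\sqrt{C_{rad} n A} + C_{rad} n)$. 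Adding the two bounds gives the stated estimate. I expect the main obstacle to be exactly the per‑arm reduction in the middle paragraph: converting the telescoping sum $\sum_k rad(a_{0,i}, k)$ into a clean $O(\sqrt{C_{rad} A_i})$ term requires trading the unknown mean $a_{0,i}$ for the observed average $\bar a_{T,i}$ using only the clean‑execution inequality and then carefully absorbing the lower‑order correction terms; a secondary subtlety is justifying the ``visit count runs through $1,\dots,N_{T,i}$'' bookkeeping, which is immediate once $z_t$ is a point mass but would otherwise need a concentration argument to pass from fractional weights to integer counts.
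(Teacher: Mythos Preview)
The paper does not give a proof of this proposition: it is imported verbatim from \citet{Badanidiyuru2018} and used as a black box (see the sentence ``We use the following proposition from \citeauthor{Badanidiyuru2018}'' immediately preceding the statement, and the application in Equation~\ref{eq:ucb-bound} immediately after). There is therefore nothing in the present paper to compare your argument against.

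That said, your reconstruction is essentially the argument of \citet{Badanidiyuru2018}: the $(b_t-a_0)+(a_0-a_t)$ split, the reduction to point-mass $z_t$, the per-arm summation $\sum_{k\le N_{T,i}} rad(a_{0,i},k)$ via the harmonic/square-root estimates, the swap $a_{0,i}N_{T,i}\to A_i$ using the clean-execution bound, and the final Cauchy--Schwarz aggregation are all the standard steps, and your handling of the cross term $C_{rad}^{3/4}A_i^{1/4}$ is the usual AM--GM absorption. Two small points to tidy up: (i) the hypothesis as stated runs over $1<t<T$, so strictly you should invoke the bound at $t=T-1$ (or note that $N_{T,i}$ and $N_{T-1,i}$ differ by at most one) rather than at $t=T$; (ii) in this paper's combinatorial setting a round may pull several arms, so the ``$z_t$ is a point mass'' reduction really operates at the level of individual \emph{plays} within a round rather than rounds themselves, which only changes bookkeeping constants but is worth saying explicitly. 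Neither affects the conclusion.
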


    \noindent Taking $a_t = \mu$, $b_t = u_t$ and vector $a_0 = \mu$ in Proposition \ref{prop:error-bound}, we get
    \begin{equation}
    \label{eq:ucb-bound}
        | \sum_{t=2}^T \delta_t z_t| \le O(\sqrt{C_{rad} n REW_{\ourbwk}} + C_{rad} n \log{T}).
    \end{equation}
    
    \noindent We now combine the results to provide the proof for Theorem \ref{th:regret}.
    
\begin{proof}(Theorem \ref{th:regret})
        For $n \ge \frac{B'}{\log{dT}}$, the bound in Claim \ref{claim:regret-stoch} is trivially true. Therefore we can assume without loss of generality that $n < \frac{B'}{\log{dT}}$. We observe that 
        $$
        OPT_{LP} \left( \sqrt{\frac{\ln{d}}{B'}} + \frac{n}{B'}\right) = O\left(\sqrt{n \log{(ndT)}} \frac{OPT_{LP}}{\sqrt{B}}\right)
        $$
        The term $n$ on the right side of the bound in Claim \ref{claim:regret-stoch} is bounded above by $n\log{(dnT)}$.
        
        The theorem follows by plugging in $C_{rad} = \theta(\log{ndT})$, and $d = n+1 = \theta(n)$, in Equation \ref{eq:ucb-bound}, along with Claim \ref{claim:regret-stoch}. 
    \end{proof}

\section{Simulation-based Experiments}
\label{sec:exp}
In this section, we compare our proposed algorithms \greedy\ and \ourbwk\ with the existing SemiBwK-RRS algorithm for CBMAB problem. We begin with explaining the experimental setting and then analyse the results obtained. 
 \begin{figure*}[h]
            \begin{subfigure}{.33\textwidth}
        	\centering
    	    \includegraphics[scale=0.33]{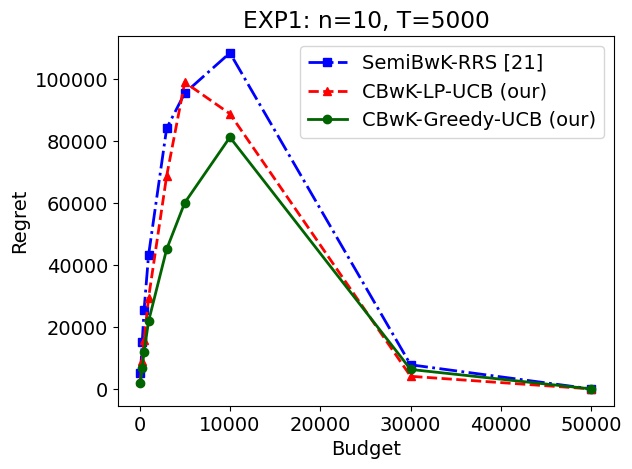}
    	    \caption{wrt $B$, for fixed $T$}
            \label{fig:regretB}
            \end{subfigure}	
            \begin{subfigure}{.33\textwidth}
        	\centering
    	    \includegraphics[scale=0.33]{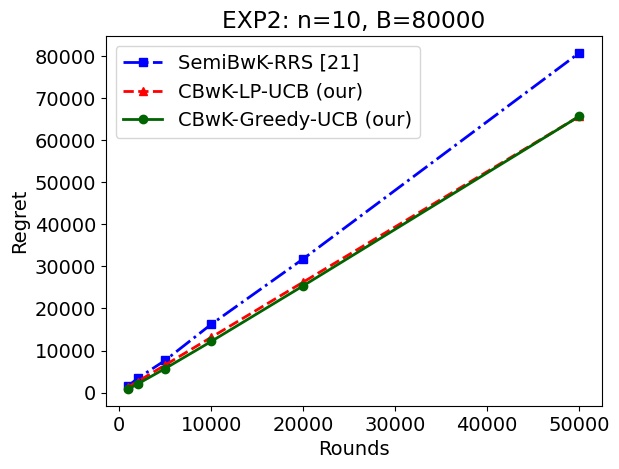}
    	    \caption{wrt $T$, for fixed $B$}
            \label{fig:regretT}
            \end{subfigure}	
            \begin{subfigure}{.33\textwidth}
        	\centering
    	    \includegraphics[scale=0.33]{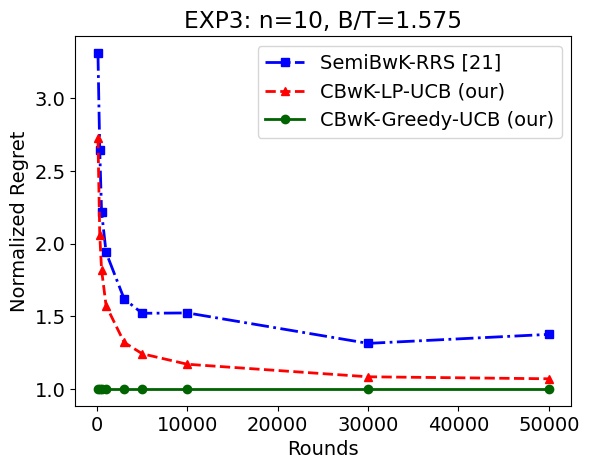}
    	    \caption{wrt $T$, for fixed $\frac{B}{T}$}
            \label{fig:regretBT}
            \end{subfigure}	
        \caption{Regret Comparison under different settings when the rewards and costs of the arm are drawn independently from identical distribution}
        \label{fig:regretvary}
        \end{figure*}
    
\subsection{Experimental Set-up}
\label{ssec:setup}
    For simulation of arms, we generate mean values and costs as follows: $\forall i \in \mathcal{N}, \mu_i \sim U[0, 1], c_i \sim U[0, 1]$. We take $\alpha = 5$ for \greedy as well as SemiBwK-RRS. We take $\epsilon = 0$ in SemiBwK-RRS, to maintain consistency with what \citet{Karthik2017} mentioned in their experiments. We report average over 100 randomly generated instances for each of the following experiments.
    
    \noindent\textsf{EXP1:} Varying $B=100\rightarrow 50000$ for $n=10$ and $T=5000$ to study effect of the budget on regret.
    
    \noindent\textsf{EXP2:} We vary $T=1000\rightarrow 50000$ for fixed budget $B=80000$ and $n=10$ to study effect of increased rounds on regret. 
      
    \noindent\textsf{EXP3:} We vary $T=1000\rightarrow 50000$ for fixed budget/round ratio $B/T=1.575, n=10$ to study effect of the increased task (with proportional budget increase) on regret.
     
    \noindent\textsf{EXP4:} For non i.i.d. arms (Figure \ref{fig:non-iid}), we vary $T=100\rightarrow 2000$ for fixed budget/round ratio $B/T=1.575$ and $n=4$ to study effect of the increased task (with proportional budget increase) on regret when the arms rewards are selected as follows: $\mu_1,c_1\sim U[0.9,1],\mu_2,c_2\sim U[0.6,0.8], \mu_3,c_3 \sim U[0.2,0.4], \mbox{ and } \mu_4,c_4 \sim U[0,0.1]$. In practical scenarios these arms can be seen as high, medium, low, very low rewarding arms.
\begin{figure}[h]
        	\centering
    	    \includegraphics[scale=0.33]{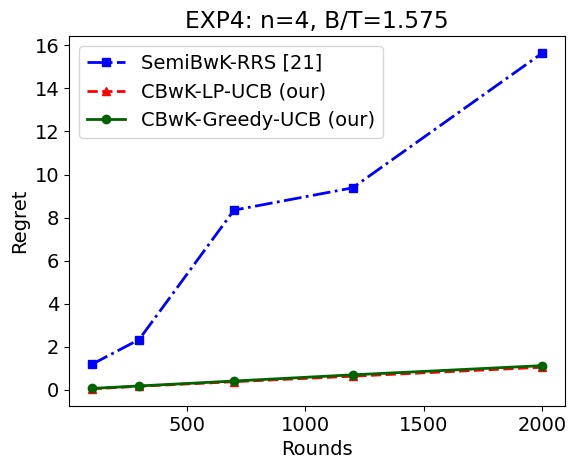}
    	    \caption{Regret w.r.t. $T$ for fixed $B/T$, non-i.i.d. arms}
    	    \label{fig:non-iid}
\end{figure}
\subsection{Empirical Analysis}
\label{ssec:comparison}
As can be seen from the figures that \greedy\ achieves the lowest regret in all four experiments. \ourbwk\ performs almost as good as \greedy, whereas SemiBwK-RRS performs a lot worse. The difference in the algorithms is less evident in EXP1 and EXP2 as regret dominantly depends  on $OPT_{LP}$, for all three algorithms and and $OPT_{LP}$ increases with increase in $B$ as well as $T$. Thus, to study relative performance of the algorithms as $T$ increase, we plot regrets relative to \greedy\ (\greedy\ normalized to 1) for EXP3. It clearly indicates superiority of our algorithms by not fixing the budget per round. We see that the regrets of \greedy\ and \ourbwk\ are 35\% better than that of SemiBwK-RRS in EXP1, EXP2 and EXP3, and multiple times better in EXP4. The standard deviation for \greedy\ was also not very high. For EXP1, EXP2 and EXP3, the Coefficient of Variance for \greedy\ was noted to be below 28\%, whereas it reached around 42\% in the worst case for SemiBwK-RRS.


\subsubsection*{Regret for High Budget}
\label{ssec:highbud}
    
    For budget values $B' >= T \sum_{i=1}^n c_i$, we get a regret of 0. It is because the optimal solution consists of selecting every arm for every round. In both \greedy\ as well as \ourbwk, we choose to not select an arm only when there is not enough budget to select it after the arms with higher bangperbuck have been selected. If the budget is enough to select all arms for all rounds, that scenario will never occur. Hence, the super-arms selected by \greedy, \ourbwk\ and optimal solution will be exactly the same in every round.

\section{Conclusion}
\label{sec:conclusion}
      We considered a Budgeted Combinatorial Multi-Armed Bandit setting with semi-bandit feedback. The existing literature has a fixed number of arm pulls (generally single pull) or a fixed pre-determined budget per round. We focused on the more general setting without any such restrictions. We first proposed \offgreedy, which uses a greedy technique to solve this in the offline setting. We showed why the problem is difficult to solve. We provided a reduction to the knapsack problem by creating $T$ copies of each arm and showed that our greedy algorithm is 2-approx. We designed \ourbwk\ using ingenious reduction to PrimalDualBwK by \citet{Badanidiyuru2018}. We provided regret bound for \ourbwk\ in the unknown stochastic setting. We compared our work with SemiBwK-RRS, the closest work to our setting, and experimentally demonstrated that \greedy\ and \ourbwk\  outperform SemiBwK-RRS.
    
    We believe an exciting direction to explore would be a non-additive setting such as a submodular combination of rewards. One can extend our work to variants such as sleeping bandits and contextual bandits. We believe our paper is the first step in this direction as the setting we have considered is relatively unexplored.
 
\newpage
\bibliographystyle{abbrvnat}
\bibliography{refer.bib}

\end{document}